%% file: main.tex
\documentclass[11pt]{article}

\usepackage{tgpagella}
\usepackage{mathpazo}
\usepackage{inconsolata}
\usepackage[letterpaper, margin=0.75in, top=0.6in, bottom=0.7in]{geometry}
\usepackage{natbib}
\setcitestyle{authoryear,round,citesep={;},aysep={,},yysep={;}}
\input{math_commands.tex}

\usepackage[utf8]{inputenc} %
\usepackage[T1]{fontenc}    %
\usepackage{hyperref}       %
\usepackage{url}            %
\usepackage{booktabs}       %
\usepackage{amsmath}        %
\usepackage{amsthm}         %
\usepackage{amsfonts}       %
\usepackage{nicefrac}       %
\usepackage{microtype}      %
\usepackage{xcolor}         %
\usepackage{graphicx}       %
\usepackage{enumitem}
\usepackage{tikz}
\usetikzlibrary{arrows.meta,positioning}
\usepackage{multirow}       %
\usepackage{caption}  %
\definecolor{darkblue}{rgb}{0, 0, 0.5}
\hypersetup{colorlinks=true, citecolor=darkblue, linkcolor=darkblue, urlcolor=darkblue}
\usepackage[most]{tcolorbox}

\definecolor{takeawaycol}{HTML}{2AA198}
\newtcolorbox{takeaway}[1]{%
  enhanced, breakable,
  colback=takeawaycol!8,
  colframe=takeawaycol,
  boxrule=0.5pt,
  arc=2.5mm,
  left=3mm, right=3mm, top=2mm, bottom=2mm,
  before upper={\textbf{#1.}\ }%
}

\newtheorem{proposition}{Proposition}

\begin{document}

\thispagestyle{plain}
\vspace*{-0.4in}
\begingroup
\renewcommand{\thefootnote}{\fnsymbol{footnote}}
\begin{center}
{\LARGE\bfseries Scaling Discovery through Test-Time Communication}\\[10pt]
{\normalsize Jongho Park$^{b}$\footnotemark[1] \quad Vasilis Kontonis$^{m}$ \quad Shivam Garg$^{m}$}\\[1pt]
{\normalsize Akshay Krishnamurthy$^{m}$ \quad Dimitris Papailiopoulos$^{m}$}\\[2pt]
{\normalsize\bfseries $^{b}$UC Berkeley \quad $^{m}$Microsoft Research}\\[8pt]
\end{center}
\footnotetext[1]{This work was done during an internship at Microsoft Research.}
\renewcommand{\thefootnote}{}
\footnotetext[0]{Emails: jjhpark@berkeley.edu, \{vkontonis, shigarg, akshay.krishnamurthy, dimitriosp\}@microsoft.com}
\endgroup
\vspace{-0.5cm}
\noindent\rule{\textwidth}{0.4pt}
\vspace{-0.2cm}
\renewcommand{\abstractname}{}

\begin{abstract}
\vspace{-0.5cm}
Science advances not in isolation but through collaboration, 
yet existing agentic systems capture little of this.
Whether communicating agents help remains an open question
with mixed prior results.
We show that \emph{test-time communication can substantially outperform
independent parallel attempts on challenging tasks, where sharing a
breakthrough can push the whole group forward}. 
We first study the effect of scaling multi-agent test-time communication, 
where agents have no predefined roles and communicate via a shared directory,
on ARC-AGI-3, a benchmark requiring novel problem solving.
We find that a team of $k$ communicating agents, team@$k$, matches the success rate of 
$4k$ independent agents, and this advantage grows with 
$k$, suggesting gains compound with scale.
The effect is not merely efficiency: a task that no single agent can solve, a
team of agents can solve reliably. Furthermore, these gains
transfer to research-oriented tasks, given sufficient compute. On
polyomino packing, communicating agents outperform best@$k$
and exceed the prior best-known score. On MNIST classifier compression, communication
surpasses the best-known human solution. A team of four agents produced a
1,957-byte classifier submission achieving 99.4\% test accuracy,
smaller than both the best-known human solution and the best single-agent
result. These gains are not unconditional. 
Independent agents may outperform communication when 
compute is limited or when a clear measure of progress is absent. 
However, under sufficient compute and clear feedback, multi-agent communication consistently yields stronger results.

\end{abstract}
\vspace{0.2cm}

\begin{figure}[h!]
    \centering
    \makebox[\linewidth][c]{%
    \includegraphics[width=\linewidth]{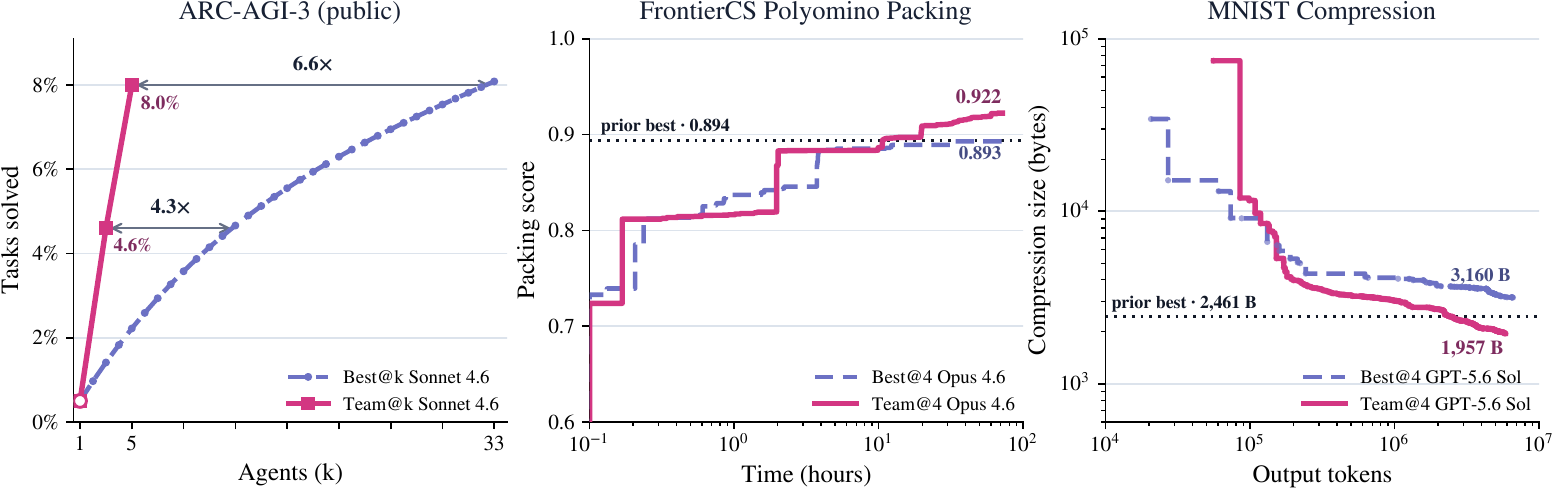}%
    }
    \caption{\textbf{Communication outperforms independent agents in open-ended
    tasks.} With communication, a team of $k$ communicating agents not only outperforms best@$k$ but scales success rate 
    on ARC-AGI-3 faster. On algorithmic and ML optimization tasks, team@$k$ beats single-agent runs 
    and best-known human solutions, yielding new state-of-the-art results.}
    \label{fig:page1_figure}
\end{figure}

\input{sections/intro}
\input{sections/method}
\input{sections/results}
\input{sections/related}
\input{sections/conclusion}

\subsubsection*{Acknowledgments}
We thank Microsoft Research AI Frontiers for supporting this work 
and Ziyang Cai for insightful discussions during the project.
This work was done during the first author's internship at Microsoft Research.

\bibliography{ref}
\bibliographystyle{plainnat}

\newpage
\appendix
\input{sections/setup_appendix}
\input{sections/proof_appendix}

\end{document}

%% file: math_commands.tex
\usepackage{amsmath,amsfonts,bm}

\def\eqref#1{equation~\ref{#1}}

\def\1{\bm{1}}

\DeclareMathAlphabet{\mathsfit}{\encodingdefault}{\sfdefault}{m}{sl}
\SetMathAlphabet{\mathsfit}{bold}{\encodingdefault}{\sfdefault}{bx}{n}

%% file: sections/intro.tex
\section{Introduction}

For test-time compute, it is standard to sample $k$ independent outputs in parallel from a large language model (LLM) and aggregate them, \emph{e.g.}, by taking the majority answer or
the best@$k$ when a verifier is available~\citep{wang2023selfconsistency,
brown2024largelanguagemonkeys,li2024moreagents,huang2025self,snell2025scalingtesttime}.
Independent attempts, however, leave discoveries made during one run unavailable to
guide another while its search is still unfolding.
This opportunity is especially crucial for LLM agents, which operate a terminal, execute
code, use tools, and observe results over horizons of hours or
days~\citep{yang2024sweagent,chan2025mlebench,merrill2026terminalbench}.
Their trajectories contain intermediate results, failed experiments, and reusable artifacts
that could help other agents avoid dead ends or build on promising approaches. \looseness=-1

Communication between agents would be the natural remedy, 
but the evidence for its benefits is mixed.
While keeping the best of $k$ independent candidates improves with $k$,
having agents exchange opinions may not~\citep{choi2025debateorvote}.
Multi-agent debate often fails to outperform chain-of-thought
prompting despite using substantially more inference compute
\citep{smit2024goingmad,zhang2025stopovervaluing}.
Agents can defer to an incorrect majority or dilute expertise by compromising 
between expert and non-expert judgments~\citep{wu2025canagentsdebate,pappu2026experts}.
Most recently, \citet{anthropic2026multiagentsystems} report that coordinating agents
find more vulnerabilities than independent agents, but use more tokens and search more broadly.
Within the common search scope, tokens per vulnerability are comparable, 
leaving the efficiency gains from communication unclear.
On the other hand, multi-agent systems clearly help when a task can be decomposed into subtasks and solved in
parallel~\citep{kim2025scienceofscaling}, but such gains are limited to cleanly decomposable tasks.
Can communication offer more than just parallelism?

Science suggests that it can. Scientists coordinate by adjusting their research to the results
achieved by others~\citep{polanyi1962republic}, while accumulated empirical evidence helps 
distinguish competing explanations and turn varied efforts into collective progress~\citep{strevens2020knowledge}.
This process is iterative rather than merely parallel.
Collaboration should therefore do more than divide and conquer. One agent's
observation redirects another's search, and partial discoveries combine into a result
that no individual reaches alone. 
We ask whether agent communication can produce the same effect.

To this end, we study a \emph{minimal form of test-time communication} 
in which identical agents receive the same open-ended objective and communicate 
through a shared workspace, without predefined roles or a central orchestrator. 
Autonomously and asynchronously, the agents can
exchange intermediate results, failures, and artifacts while their search is still
unfolding. Our central comparison pits a team of $k$ communicating agents, \textbf{team@$k$}, against
the best result from $k$ independent agents, \textbf{best@$k$}, under the same per-agent resources.
This isolates the value of communication from the gains produced by additional parallel
attempts.

We first study \texttt{ARC-AGI-3}~\citep{arcagi3report2026} using Claude Sonnet~4.6. 
In terms of full-game success rate,
team@3 matches best@13 and team@5 matches best@33, making them as effective as $4.3$ and $6.6$
times as many independent agents (Figure~\ref{fig:page1_figure}). More strikingly,
communicating teams solve games that independent agents rarely or never solve.
These benefits transfer to longer-horizon research problems. 
On Frontier-CS polyomino packing~\citep{mang2026frontiercs}, where agents pack polyominoes into a minimum-area rectangle, team@4 scores 0.922 with Claude
Opus~4.6 and 0.910 with Sonnet~4.6, compared with 0.893 and 0.891, respectively,
for the strongest solo run of each model and 0.894 for the prior best-known result.
On MNIST classifier compression under an accuracy constraint, we evaluate team@4 with GPT-5.6~Sol over 96 hours. 
The best team produces a state-of-the-art 1,957-byte classifier at 99.4\% accuracy, beating the 2,461-byte best-known human solution and single-agent baselines. 

These results suggest that test-time communication is most useful when agents have
enough compute to build on one another's discoveries and can objectively assess
whether those discoveries improve on earlier results.
Sufficient compute allows teams to overcome initial coordination costs, while an
accessible verifier, such as level success or a solution scorer, helps agents
decide which discoveries to adopt.
We call this mechanism \emph{verified progress sharing}.
By building on successive breakthroughs from different members, a communicating team can 
turn parallel exploration into cumulative progress.

When agents cannot verify intermediate progress, however, communication
may offer less benefit.
On Terminal-Bench 2.0, where available feedback may not reliably rank intermediate
solutions, we find that team@2 improves over a single attempt but does not outperform
independent pass@2.
This distinction also offers an explanation for the prior negative results, 
where agents lack a verifier to distinguish better solutions from
worse ones and therefore abandon stronger solutions in favor of weaker
ones~\citep{choi2025debateorvote,pappu2026experts}.
Together, our findings show that communication can offer more than divide and conquer parallelism
and help identify the conditions under which test-time communication excels.

\vspace{30pt}
Our contributions are as follows:
\begin{enumerate}[
      leftmargin=*,
      labelindent=0pt,
      topsep=2pt,
      itemsep=1pt,
      parsep=4pt
  ]
    \item \textbf{Communicating agents show stronger scaling than independent agents.}
    On the \texttt{ARC-AGI-3} suite of games,
    matching the solve rate of team@5 requires 33 independent agents.
    The multiplier grows with team size,
    from $4.3\times$ for team@3 to $6.6\times$ for team@5, 
    suggesting that the gains from communication compound with scale.
    In fact, the solve rate of team@$k$ increases more rapidly with $k$ than that of best@$k$,
    as seen in Figure~\ref{fig:page1_figure}.
    \item \textbf{Test-time communication unlocks tasks that single agents cannot.}
    Team@3 lifts the solve rate on the game \texttt{FT09} on \texttt{ARC-AGI-3} 
    from 9.4\% for a single agent to 90\%. 
    More strikingly, team@5 solves the game \texttt{LP85},
    which remains unsolved across 64 single-agent trials, 65\% of the time.
    Team@5 also improves the average furthest level reached over best@5 
    even on games that remain unsolved.
    \item \textbf{Communication achieves state-of-the-art results on algorithmic and ML optimization tasks.}
    Communication sustains progress over multi-day horizons.
    On Frontier-CS polyomino packing, a team of agents outperforms independent agents 
    and surpasses the prior best-known score.
    On ML compression, it produces a MNIST classifier substantially smaller than 
    those found by independent agents or the best-known human solution.
    Agents reach these solutions by refining one another's discoveries 
    and combining complementary improvements, including ideas from peers' failed approaches.
    \item \textbf{Communication benefits from verifiable progress and sufficient compute.}
    At low budgets, team@$k$ incurs a coordination tax that outweighs its benefits.
    We identify \emph{verified progress sharing} as a mechanism through which agents build on
    successive discoveries to overcome the coordination tax given enough compute.
    On Terminal-Bench 2.0, where available feedback may not reliably rank intermediate solutions,
    team@2 improves over a single attempt but does not outperform pass@2.
    This helps reconcile prior results by identifying compute and verification 
    as conditions that shape communication benefits.
\end{enumerate}

%% file: sections/method.tex
\section{Agentic Communication at Test-time}
\label{sec:method}

\begin{table}[t!]
    \centering
    \small
    \setlength{\tabcolsep}{4.5pt}
    \caption{\textbf{Summary table.} We evaluate multi-agent communication and compare the best of $k$ independent agents with a team of $k$ communicating agents on three tasks.
    All agents are run using the GitHub Copilot CLI and communication harness we describe in Section~\ref{sec:method}.
    }
    \label{tab:setup-summary}
    \begin{tabular}{@{}l|llll@{}}
        \toprule
        Benchmark & ARC-AGI-3 & Polyomino packing & MNIST Compression \\
        \midrule
        Model & Sonnet 4.6 & Sonnet, Opus 4.6 & GPT-5.6 Sol \\
        Budget per agent & per-level action budget & 3-72 hours & 96 hours \\
        Team size $k$ & 3, 5 & 3, 4 & 4 \\
        Metric & game solve rate & packing score & qualified bytes \\
        Prior best-known result & -- & 0.894 & 2,461 B  \\
        Independent best@$k$ & 1.4\% ($k$=3), 2.2\% ($k$=5)  & 0.893  & 3,160 B \\
        Communicating team@$k$ & \textbf{4.6\%} ($k$=3), \textbf{8.0\%} ($k$=5)  & \textbf{0.945} & \textbf{1,957 B} \\
        \bottomrule
    \end{tabular}
\end{table}

We study a minimal form of test-time communication instantiated through a
shared workspace among identical agents. In each
team@$k$ trial, the harness launches $k$ CLI agents concurrently
in the same task container. They use the same model, tools, task
instruction, and communication prompt. Each agent has a separate model
context and designated scratch directory, while all agents share the task
filesystem and communication artifacts.

Agents communicate directly through an append-only \textit{communication log},
which acts as an asynchronous broadcast channel. Additional shared records
contain adopted approaches, disconfirming evidence, and a
score log of the approaches so far. Without assigned roles or
a central orchestrator, the only protocol-level allocation, enforced by a
synchronization primitive, is slot ownership.
To ensure that agents claim distinct approaches without collision,
agents race to create numbered slot directories using an atomic filesystem
operation. 

The communication prompt (Appendix~\ref{app:comm-prompt}) 
specifies how agents use these mechanisms and
discourages premature convergence. After claiming a slot, each agent is asked
to declare a distinct approach by considering the already-claimed slots.
During execution, agents publish concise findings with timestamps and
reproducible evidence whenever they make notable progress, allowing peers to
reproduce or build upon their results. Results placed on the leaderboard
should include measured outcomes, reproduction instructions, approach
lineage, or known counterevidence. An agent is to adopt a peer's approach
only after observing a clearly better result, and even after adoption, it
should preserve one meaningful variation.

The protocol combines diverse exploration with evidence-based adoption to
discourage convergence on poor solutions.
Atomic slot ownership, together with the requirement to pursue distinct
approaches, helps preserve diversity.
Verifier scores give agents a basis for filtering out poorly performing
approaches when deciding what to adopt.
This contrasts with the task settings of~\citet{pappu2026experts}, where
agents lack verifier scores for agent solutions and can dilute expertise
by compromising between expert and non-expert judgments.

\subsection{Experimental Setup}
\label{sec:setup}

We study multi-agent communication on the following three tasks, 
each covering a different aspect of open-ended research tasks.

\paragraph{ARC-AGI-3}~\citep{arcagi3report2026} tests novel problem solving and interactive
discovery by asking agents to solve unfamiliar grid-world games without instructions, inferring
the rules, the goal, and the effect of each control from play alone.
Each game comprises $l_{\max} \in [6,10]$ levels, and progress depends on carrying forward
what was learned in earlier ones.
This leveled structure makes breakthroughs cleanly measurable.
Real-world research problems may not offer such a clean signal. A genuine conceptual advance
in a domain such as approximation factors for NP-hard problems 
may move the reported number by only a small constant, 
an improvement easily lost in run-to-run variance.

We evaluate on all 25 public games with Claude Sonnet 4.6 under the benchmark’s native
per-level action budget and measure solve rate as the proportion of trials that clear all
levels successfully.
Though more recent models such as GPT-6 Astra now succeed easily on the benchmark,
we focus on one model, especially one that does not saturate, to isolate the effect of test-time communication.
For teams, an agent that exhausts its action budget is terminated, while the remaining agents continue. 
We run 64 single-agent trials per game and label a game unsolved by single agents if no trial clears all levels.
For teams of three and five agents, we run 20 trials for every combination of game and team size.

\paragraph{Frontier-CS polyomino packing}~\citep{mang2026frontiercs} tests algorithmic
optimization on an NP-hard problem. Agents write and repeatedly improve a C++17 program that packs reflected and
rotated polyominoes into a minimum-area rectangle.
The scorer evaluates the submitted solution on 70 hidden test cases and returns continuous
partial credit in $[0,1]$.
The \textit{packing score} is the mean reward across the 70 cases, and the highest valid
submission in a run is retained.
The best published score is 0.894, obtained by \citet{qu2026coral} using four Claude
Opus 4.6 agents.
We test on both Claude Sonnet~4.6 and Opus~4.6 with teams of three or four agents.

\paragraph{MNIST Classifier Compression} tests empirical ML research.
Agents must train and compress a self-contained MNIST classifier~\citep{lecun1998gradient}.
An artifact qualifies only at 99.4\% test accuracy or better and 
agents cannot inspect the test images, labels, or individual errors, as described in
Appendix~\ref{app:mnist-setup}. Among qualifying
artifacts, only compressed size counts. We measure size using a deterministic gzip-9 compression of
the submission, which includes inference code and model weights (they may be separate files or one file,
depending on the agent's design).
To the best of our knowledge, the best prior result comes from the open-source model introduced by
\citet{gandhi2024tinymnist}, which we reproduce as a 2,461-byte submission under our
artifact format.
We test a team of four GPT-5.6~Sol agents over a 96-hour period.
Further implementation details can be found in Appendix~\ref{app:setup}.

\paragraph{Runtime Environment.}
Every agent operates through GitHub Copilot CLI~\citep{github2026copilotcli},
with each trial isolated as a Harbor task~\citep{harborframework2026}.
Single-agent trials contain one CLI agent. Team@$k$ trials contain $k$ agents in the
same task container. Agents receive the same task, tools, and base filesystem,
but have private scratch directories. Within a team,
agents communicate explicitly. They append timestamped claims, evidence, failures,
and adoption events to shared logs, publish measured candidates to a board, and
use file locks to serialize changes to a shared graded artifact. Distinct trials
share neither files nor messages.
Hidden evaluation data are kept outside the agent container. Polyomino submissions are
sent to a separate scorer service, while MNIST submissions are evaluated by a host-owned
oracle in a fresh network-disabled container. These services return evaluation feedback
but never expose the underlying cases, labels, or retained program outputs.

\paragraph{Metrics.}
Our central comparison is the outcome of $k$ communicating agents (\textbf{team@$k$}) versus
the best outcome among $k$ independent agents (\textbf{best@$k$}).
We measure both metrics against wall-clock time and against total output tokens to account for compute.
For \texttt{ARC-AGI-3}, a level counts as solved if any of the $k$ agents clears it, 
and a game counts as solved if any of the $k$ agents clears all levels.
For a finite ARC solo pool with $s$ successes among $n$ trials, we compute best@$k$
exactly, without replacement, as
$\mathrm{best@}k = 1 - \binom{n-s}{k}/\binom{n}{k}$.
We apply the same calculation level by level and average over games,
giving each game equal weight. 
Throughout the paper, solve rate refers to the final solve rate averaged over all 25 games, and per-level rates are labeled explicitly.
We measure team@$k$ and best@$k$ as the best-so-far packing score (mean packed-cell density across 70 cases) or MNIST classifier compression size (gzipped submission size including code and weights).

Within every comparison, communicating and independent agents use the same model, maximum reasoning
effort, longest context-length setting, and per-agent resource allocation.
Appendix~\ref{app:common-setup} provides the communication protocol, prompts, and Copilot
CLI versions.

%% file: sections/results.tex
\section{Experimental Results}

\subsection{ARC-AGI-3}
\label{sec:arc_agi_results}

\begin{table}[t!]
  \centering
  \caption{\textbf{ARC-AGI-3 per-level solve rate.}
  Proportion of trials reaching within $d$ levels of a full solve, averaged over
  25 games. Test-time communication pays off most where the task is hardest. At $k=5$,
  team@$k$ leads best@$k$ by $1.2\times$ at shallow depths and by $3.6\times$ at
  a full solve. Communication also scales with team size. Matching team@$k$'s final solve rate
  takes a pool of 13 independent agents at
  $k=3$ and 33 agents at $k=5$.
}
  \label{tab:arc_solve_level}
  \begin{tabular}{@{}rcccccc@{}}
    \toprule
    & \multicolumn{3}{c}{$k=3$} & \multicolumn{3}{c@{}}{$k=5$} \\
    \cmidrule(lr){2-4}\cmidrule(l){5-7}
    Levels from target & team@3 & best@3 & best@13 & team@5 & best@5 & best@33 \\
    \midrule
    5         & 52.8 & 45.4 & 63.9 & 64.1 & 52.8 & 69.5 \\
    4         & 28.2 & 25.8 & 46.5 & 33.6 & 32.9 & 55.4 \\
    3         & 16.6 &  9.7 & 20.8 & 22.9 & 12.8 & 31.3 \\
    2         &  7.2 &  5.0 &  9.3 & 16.2 &  6.1 & 15.8 \\
    1         &  4.8 &  2.6 &  6.3 &  10.0 &  3.7 & 10.2 \\
    (Solved) 0 & \textbf{4.6} & 1.4 & 4.7 & \textbf{8.0} & 2.2 & 8.1 \\
    \bottomrule
  \end{tabular}
\end{table}

Despite \texttt{ARC-AGI-3}'s visual simplicity, the most capable LLM-based agents fail on most games without a specialized harness. 
For this benchmark, we use Claude Sonnet 4.6 agents through Copilot, which reach a final success rate below 1\%.

Table~\ref{tab:arc_solve_level} reports the percentage of runs that reach success or come within $d$ levels of it. 
Across 25 games, team@$k$ outperforms best@$k$ at every level for both $k=3$ and $k=5$, and the advantage widens with depth, from $1.2\times$ to $3.6\times$ at $k=5$. 
The leftmost panel of Figure~\ref{fig:page1_figure} shows a further encouraging pattern in which team@$k$ rises monotonically with team size and faster than best@$k$.
Matching a communicating team takes 4.3--6.6$\times$ as many independent agents, a multiplier that itself grows with $k$, 
though returns may diminish beyond five agents.

\begin{figure}[t!]
\centering
\begin{minipage}[t]{.72\linewidth}
\vspace{0pt}
\centering
{\footnotesize \textbf{(a)} Communication effect in ARC-AGI-3\par}
\vspace{2pt}
\includegraphics[width=\linewidth,trim=0 0 0 20pt,clip]{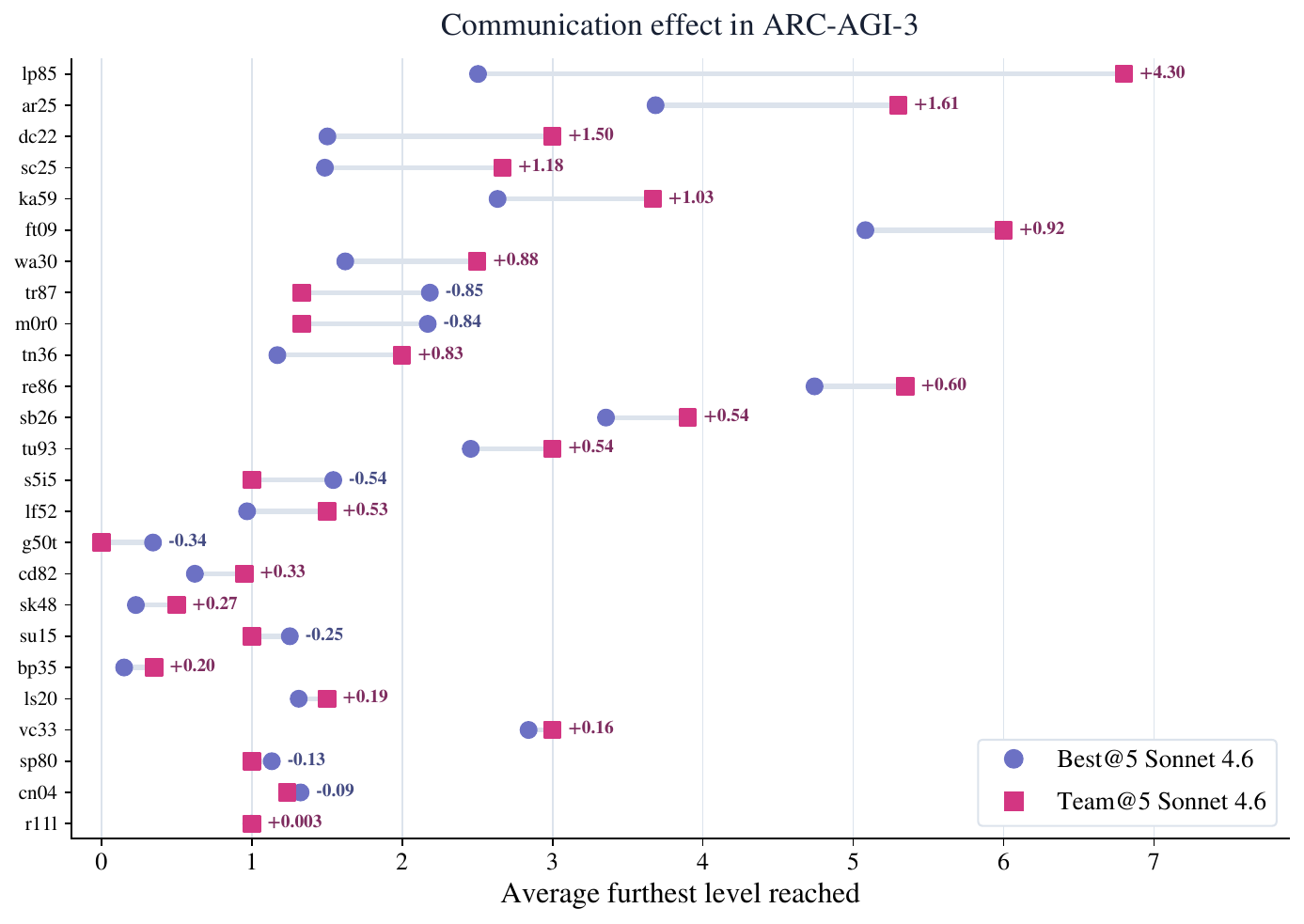}
\end{minipage}\hfill
\begin{minipage}[t]{.25\linewidth}
\vspace{0pt}
\centering
{\footnotesize \textbf{(b)} Final solve rate\par}
\vspace{2pt}
\scriptsize
\setlength{\tabcolsep}{2.2pt}
\renewcommand{\arraystretch}{0.83}
\begin{tabular}{@{}lrr@{}}
\toprule
Game & best@5 & team@5 \\
\midrule
\texttt{LP85} & 0.0 & \textbf{65.0} \\
\texttt{AR25} & 7.8 & \textbf{20.0} \\
\texttt{DC22} & 0.0 & 0.0 \\
\texttt{SC25} & 0.0 & 0.0 \\
\texttt{KA59} & 0.0 & 0.0 \\
\texttt{FT09} & 39.9 & \textbf{100.0} \\
\texttt{WA30} & 0.0 & 0.0 \\
\texttt{TR87} & 0.0 & 0.0 \\
\texttt{M0R0} & 0.0 & 0.0 \\
\texttt{TN36} & 0.0 & 0.0 \\
\texttt{RE86} & 0.0 & 0.0 \\
\texttt{SB26} & 7.8 & \textbf{15.0} \\
\texttt{TU93} & 0.0 & 0.0 \\
\texttt{S5I5} & 0.0 & 0.0 \\
\texttt{LF52} & 0.0 & 0.0 \\
\texttt{G50T} & 0.0 & 0.0 \\
\texttt{CD82} & 0.0 & 0.0 \\
\texttt{SK48} & 0.0 & 0.0 \\
\texttt{SU15} & 0.0 & 0.0 \\
\texttt{BP35} & 0.0 & 0.0 \\
\texttt{LS20} & 0.0 & 0.0 \\
\texttt{VC33} & 0.0 & 0.0 \\
\texttt{SP80} & 0.0 & 0.0 \\
\texttt{CN04} & 0.0 & 0.0 \\
\texttt{R11L} & 0.0 & 0.0 \\
\bottomrule
\end{tabular}
\end{minipage}
\caption{\textbf{Net communication effect on each ARC-AGI-3 game.}
\textbf{(a)} Average furthest level reached by team@5 and best@5, sorted by absolute difference.
Communication helps on 18 of the 25 games and hurts on 7, and the best gain is $+4.30$ levels 
on \texttt{LP85} while no loss exceeds $0.85$.
\textbf{(b)} Success rate
across 64 single-agent trials and 20 team@5 trials.
The aggregate gain comes from a handful of games, as most remain out of
reach for Sonnet 4.6.
Nevertheless, where communication helps, it helps substantially. Across the four games with
any team@5 success, solve rate rises from $13.8\%$ to $50.0\%$ on average.
Most strikingly, \texttt{LP85} goes from no single-agent success to $65\%$.
}
\label{fig:arc-dumbbell}
\vspace{-6pt}
\end{figure}

Figure~\ref{fig:arc-dumbbell} breaks these aggregate results down by game. Across the four games
solved at least once by team@5, the average solve rate rises from $13.8\%$ to $50.0\%$. The
improvement is especially striking on \texttt{LP85}, where 64 single-agent trials yield no
successes, yet team@5 achieves a $65\%$ solve rate. Nor is the effect limited to the largest
team. On \texttt{FT09}, the solve rate climbs from $25.9\%$ with best@3 to $90\%$ with team@3.
Although these gains are concentrated, this reflects the limits of the base model rather than of
communication, since most of the 25 games remain beyond Sonnet 4.6 under either setting.

Moreover, zero-percent solve rates obscure meaningful gains in progress. Figure~\ref{fig:arc-dumbbell}(a) reports the average furthest level reached. Under team@5, several never-solved
games still advance by roughly a full level, showing that test-time communication produces real
progress across most games. The largest gain is $+4.30$ levels, while no loss exceeds $0.85$
levels. When coordination fails, the cost is negligible. When it succeeds, the gains can be
substantial.

\begin{takeaway}{Agents in collaboration outperform agents in isolation}
Communication increases the number of games solved in \texttt{ARC-AGI-3} by $3.3\text{--}3.6\times$,
while team@$k$'s per-level solve rate is always higher than best@$k$'s (Table~\ref{tab:arc_solve_level}).
Matching team@$k$'s final solve rate requires approximately $4.3\times$ as many
independent agents for $k=3$ and $6.6\times$ as many for $k=5$.
Figure~\ref{fig:page1_figure} suggests that the benefit grows with team size.
\end{takeaway}

\begin{figure}[t!]
\centering
\includegraphics[width=.49\linewidth]{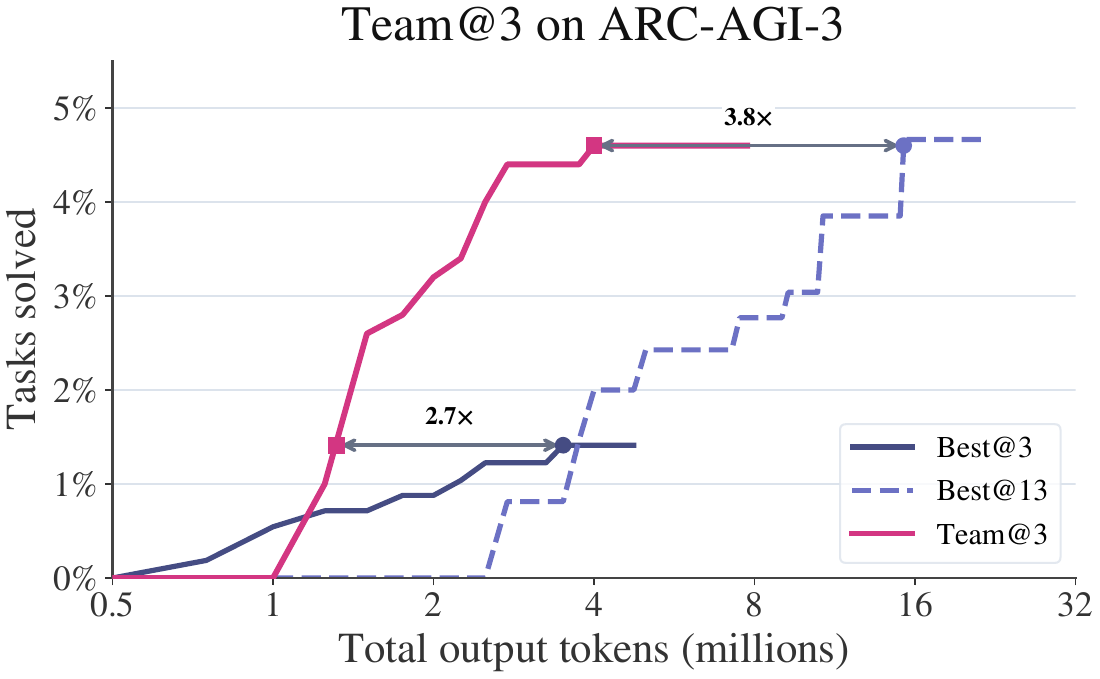}\hfill
\includegraphics[width=.49\linewidth]{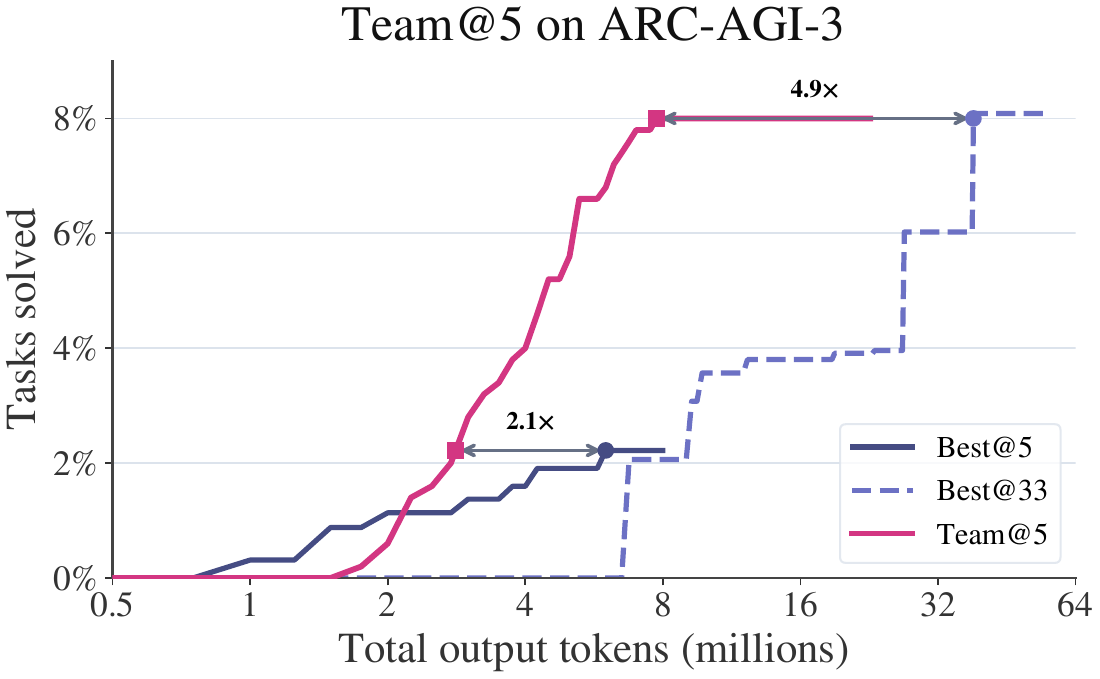}
\caption{\textbf{Token efficiency of communicating agents.} Solve rate against
cumulative output tokens for team@$k$, best@$k$, and the number of independent agents needed
to match the team's final solve rate. Best@$k$ never
reaches the team's final rate at any budget. For $k$=3 and $k$=5, respectively, matching that rate
takes $3.8\times$ and $4.9\times$ more tokens than the team spends.
The gain is not immediate. Best@$k$ leads in a low-compute regime ($\le$ 400K output tokens per agent),
a coordination tax that the team overcomes at larger budgets.}
\label{fig:arc-token-efficiency}
\vspace{-5pt}
\end{figure}

\begin{figure}[t!]
\centering
\includegraphics[width=\linewidth]{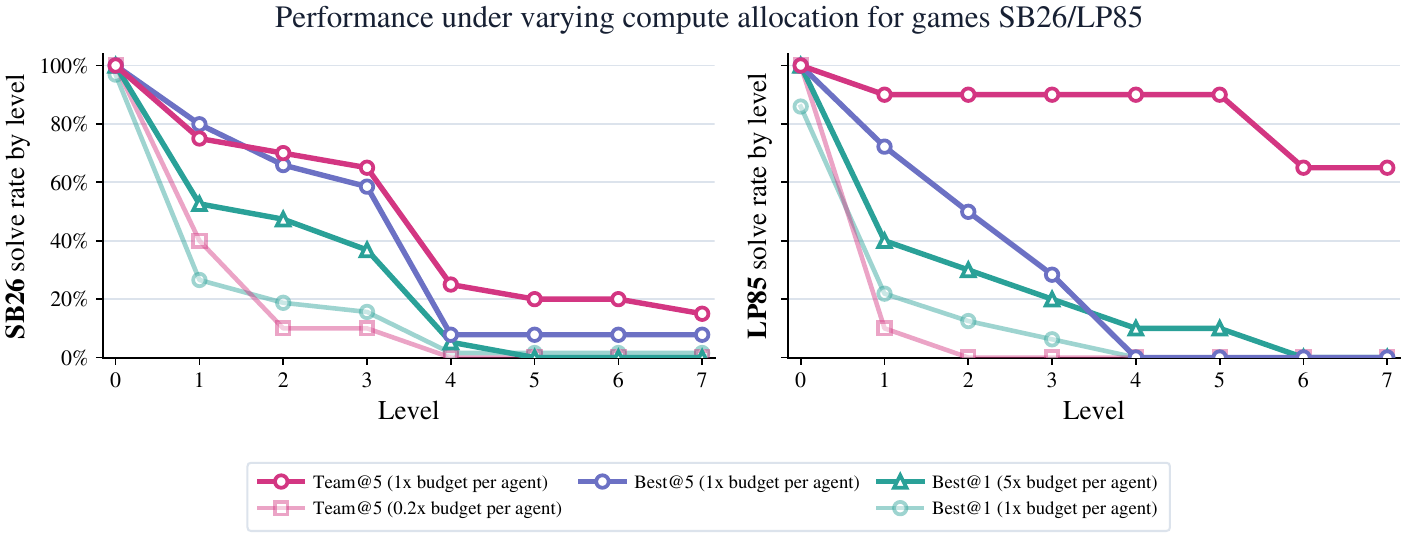}
\caption{\textbf{Effective collaboration depends on compute allocation.} For games SB26 and LP85,
we vary the per-agent action budget while holding the total action budget
at $5\times$ the native budget (solid line). We also consider the case in which a team must divide one agent's native budget (faded line).
The marker shape indicates per-agent budget, while the color indicates agent configuration.
When each agent is given the native budget, per-level team@5 leads at every level past the first,
ahead of both best@5 and a single agent given $5\times$ the budget. 
Cutting the same five agents to $0.2\times$ each drops team@5 
below a single agent.}
\label{fig:arc-budget-constraint}
\vspace{-5pt}
\end{figure}

Is test-time communication simply a matter of spending more tokens than independent agents do?
Figure~\ref{fig:arc-token-efficiency} paints a more nuanced picture.
By the end of evaluation, team@$k$ spends roughly 8 million output tokens with three agents and 24 million with five, nearly twice what best@$k$ spends.
However, to solve more than 1\% of \texttt{ARC-AGI-3} tasks, 
team@$k$ reaches any given accuracy with fewer total tokens, with best@3 spending $2.7\times$ as many as team@3 and best@5 spending $2.1\times$ as many as team@5.
Matching the solve rate of test-time communication is even costlier, as pools of 13 and 33 independent agents
must spend as much as 3.8--4.9$\times$ the team's tokens.

We further study \texttt{SB26} and \texttt{LP85}, two games that benefit from communication, with
two action-budget ablations. We first give a single agent $5\times$ the native action budget,
matching the total budget of standard team@5 or best@5. This configuration tests whether a longer 
horizon of actions of a single agent can match the performance of a team of communicating or independent
agents. We then reduce team@5's per-agent action budget to $0.2\times$, so
its total action budget matches that of a single agent.

At matched total budget, communication wins. On \texttt{SB26}, best@5 leads early
at $80\%$ to $75\%$, but team@5 overtakes at level 2 and solves the final level in
$15\%$ of trials against roughly $8\%$ for best@5 and zero for the long-horizon
single agent. \texttt{LP85} separates further, where team@5 holds $90\%$ through
level 5 and finishes at $65\%$ while best@5 falls from $72\%$ to zero by level 4
and the single agent peaks at $10\%$. A longer horizon alone does not substitute
for coordination.
This advantage, however, vanishes once the budget is reduced.
At $0.2\times$ per agent, team@5 drops to zero after level 1 on \texttt{LP85} and
after level 3 on \texttt{SB26}, in both cases losing to a single agent spending
the same total. Communication pays only when each agent has enough budget to
explore on its own.

\begin{takeaway}{Communication is more time- and token-efficient given sufficient compute}
Communicating teams pay an initial coordination tax but eventually
overtake best@$k$ if enough compute is available. 
Matching team@$k$ with independent agents requires
$3.8\times$ as many output tokens for $k=3$ and $4.9\times$ 
as many for $k=5$ (Figure~\ref{fig:arc-token-efficiency}).
As seen in Figure~\ref{fig:arc-budget-constraint}, teams can also 
outperform single agents that have the budget of an entire team.
However, with a smaller budget, teams may divide already constrained
resources and perform worse than independent agents.
\end{takeaway}

\paragraph{Relative human action efficiency.}
Throughout this paper, we take solve rate as our primary metric, since our
question is whether communication and larger teams, that is, more test-time
compute, lead to more tasks solved in \texttt{ARC-AGI-3}.
The official metric, by contrast, scores a run not only by what it clears but 
by how economically it plays, using a
metric called relative human action efficiency (RHAE)~\citep{arcagi3report2026}.
We define RHAE below and then use it to show
that test-time communication improves per-agent efficiency as well.

\begin{figure}[t!]
\centering
\includegraphics[width=.9\linewidth]{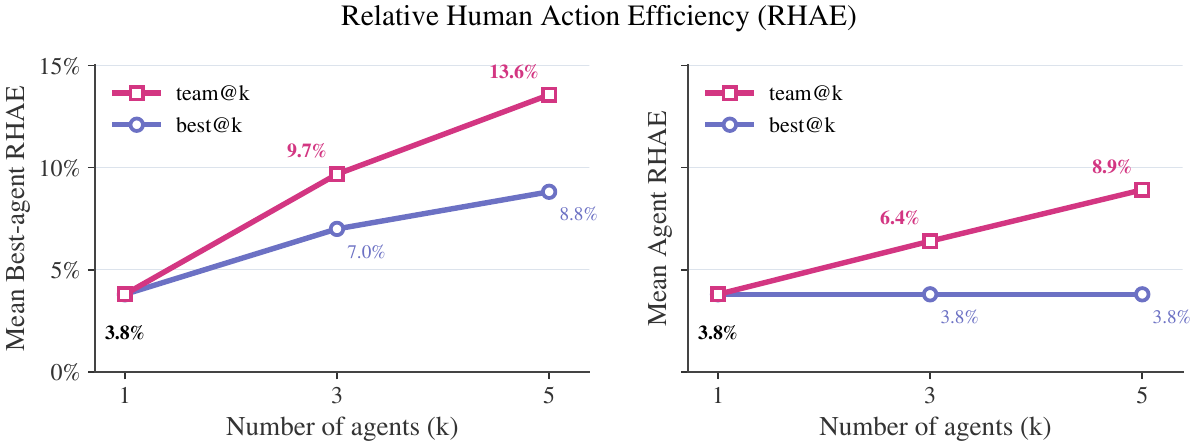}
\caption{\textbf{Relative human action efficiency of communicating agents.}
RHAE (Eq.~\ref{eq:rhae-level}) for the team's best agent and average agent across all games.
RHAE scores trajectories based on whether and how quickly an agent
solves each level relative to human performance. The figure shows that
communication makes each agent better off. On average, each agent's action efficiency
increases with team size, while the best agent in the team is more efficient
than the best independent agent.}
\label{fig:arc-agent-rhae}
\end{figure}

An action is a discrete interaction that changes the game
state, so reasoning, tool calls, and read-only inspection are not charged. For level $\ell$
of game $e$, let $a_{\ell,e}$ be the actions the agent spends and $h_{\ell,e}$ the
human baseline, taken as the upper-median best action count over first-time human players.
The level score is
\begin{equation}
    S_{\ell,e} = \min\left\{1.15,\;
        \left(\frac{h_{\ell,e}}{a_{\ell,e}}\right)^{\!2}\right\},
    \label{eq:rhae-level}
\end{equation}
with $S_{\ell,e}=0$ for any level the agent never completes. Game scores weight
level $\ell$ by $w_\ell=\ell$ and are capped by the weighted fraction of levels completed,
\begin{equation}
    E_e = \min\left\{
        \frac{\sum_{\ell=1}^{d} w_\ell}{\sum_{\ell=1}^{n} w_\ell},\;
        \frac{\sum_{\ell=1}^{n} w_\ell\, S_{\ell,e}}{\sum_{\ell=1}^{n} w_\ell}
    \right\},
    \label{eq:rhae-env}
\end{equation}
where $d$ of $n$ levels are completed, and the reported RHAE is the mean of $E_e$ over
games. 

Using RHAE, Figure~\ref{fig:arc-agent-rhae} compares independent Sonnet 4.6 agents with
communicating teams across all 25 games. 
Communication improves not only the strongest team
member but also the average member. The best agent's mean RHAE
rises from $3.8\%$ for a single agent to $9.7\%$ with team@3 and $13.6\%$ with team@5, exceeding
the corresponding best@3 and best@5 values of $7.0\%$ and $8.8\%$. This gap shows that the gain is
not merely the result of selecting the best outcome from more agents.
The average communicating agent also improves with team size, 
while the average independent agent remains at $3.8\%$ (by definition). Strikingly, the
average agent in team@5 matches the best of five independent agents, at $8.9\%$ versus $8.8\%$.
Together, the two panels show that communication improves both the average and the strongest team
member, rather than improving team performance only by pooling more attempts.

\begin{takeaway}{Communication boosts the action efficiency of each agent in the team}
Communication improves both the strongest agent and the average agent. At
$k=5$, the best agent in a communicating team reaches $13.6\%$ mean RHAE,
compared with $8.8\%$ for best@$5$. The average team agent reaches $8.9\%$,
performing as efficiently as the best of five independent agents.
\end{takeaway}

\subsection{Polyomino Packing}
\label{sec:polyomino}

\begin{figure}[h!]
\centering
\includegraphics[width=\linewidth]{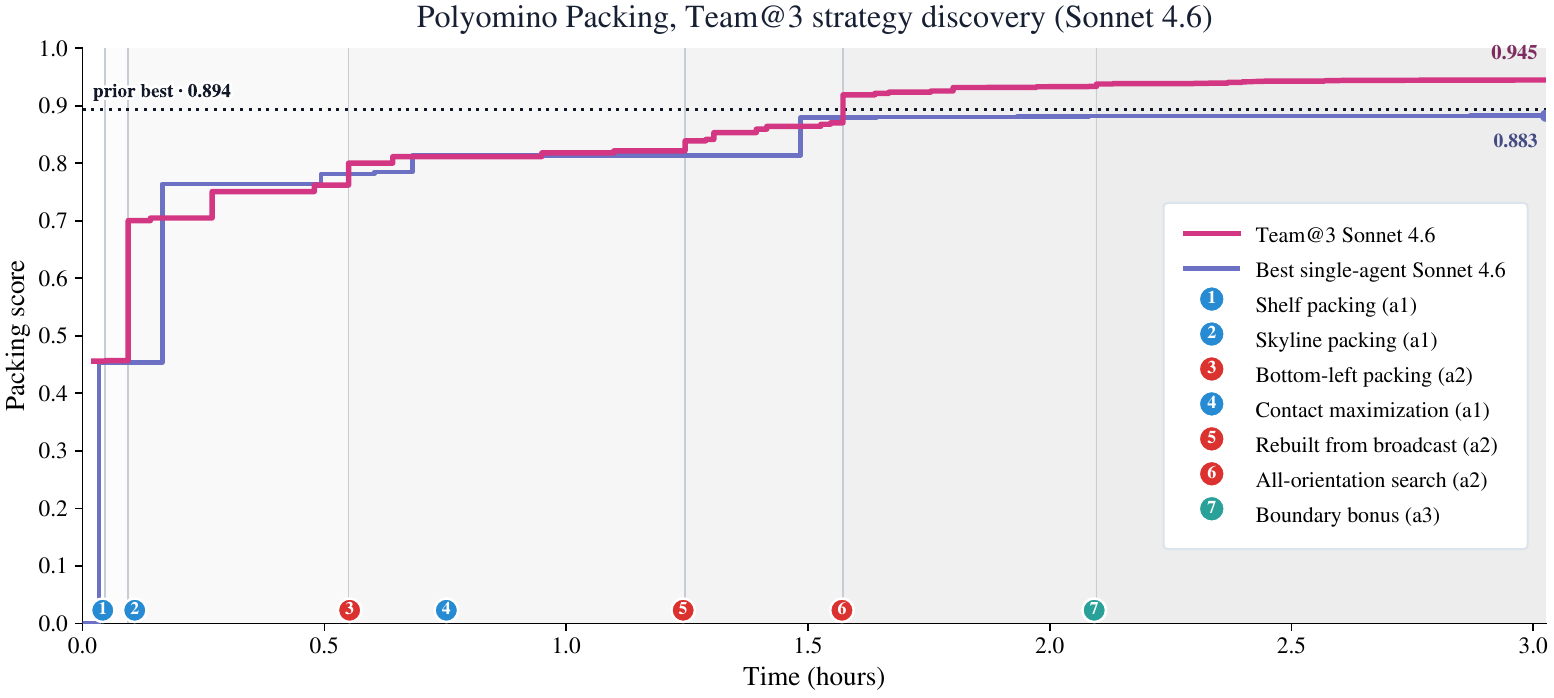}
\caption{\textbf{Communicating agents surpass the single-agent plateau and
set a state-of-the-art packing score.}
Best-so-far packing score for the highest-scoring run among 60 single-agent 
trials and 20 team@3 trials. Team@3 pulls decisively ahead at about 1.6 hours and 
reaches $0.945$, a new best score within the benchmark, above
$0.883$ for the single agent and the prior-best score of $0.894$.
Numbered markers identify method discoveries and their contributing agents.
Shaded bands track changes in the team's leading method.}
\label{fig:poly-timeline}
\end{figure}

We now turn to Frontier-CS~\citep{mang2026frontiercs} to test whether test-time communication
yields stronger performance on challenging algorithmic tasks. We study its polyomino packing
task, in which agents must pack a set of pieces into a rectangular box of minimal area.
The task is NP-hard, so agents must discover good heuristics and design a packing algorithm
rather than search exhaustively. The best known score of $0.894$ was obtained by
\citet{qu2026coral} using four Opus 4.6 agents.

We use Claude Sonnet 4.6 and Opus 4.6 agents under the same communication protocol as in
\texttt{ARC-AGI-3}.
We first evaluate Sonnet 4.6 under Frontier-CS's three-hour limit, with 60 single-agent
trials and 20 team@3 trials.
We then extend the time limit to 72 hours to study long-horizon performance and 
run 12 single-agent trials and 2 team@4 trials.
For each method, we report the trajectory of the run that achieves the highest score.
Our focus is on whether a method can discover a solution beyond the existing frontier,
where a single breakthrough matters even when other attempts are unsuccessful 
\citep{yuksekgonul2026learning}.

\begin{figure}[t!]
\centering
\includegraphics[width=\linewidth]{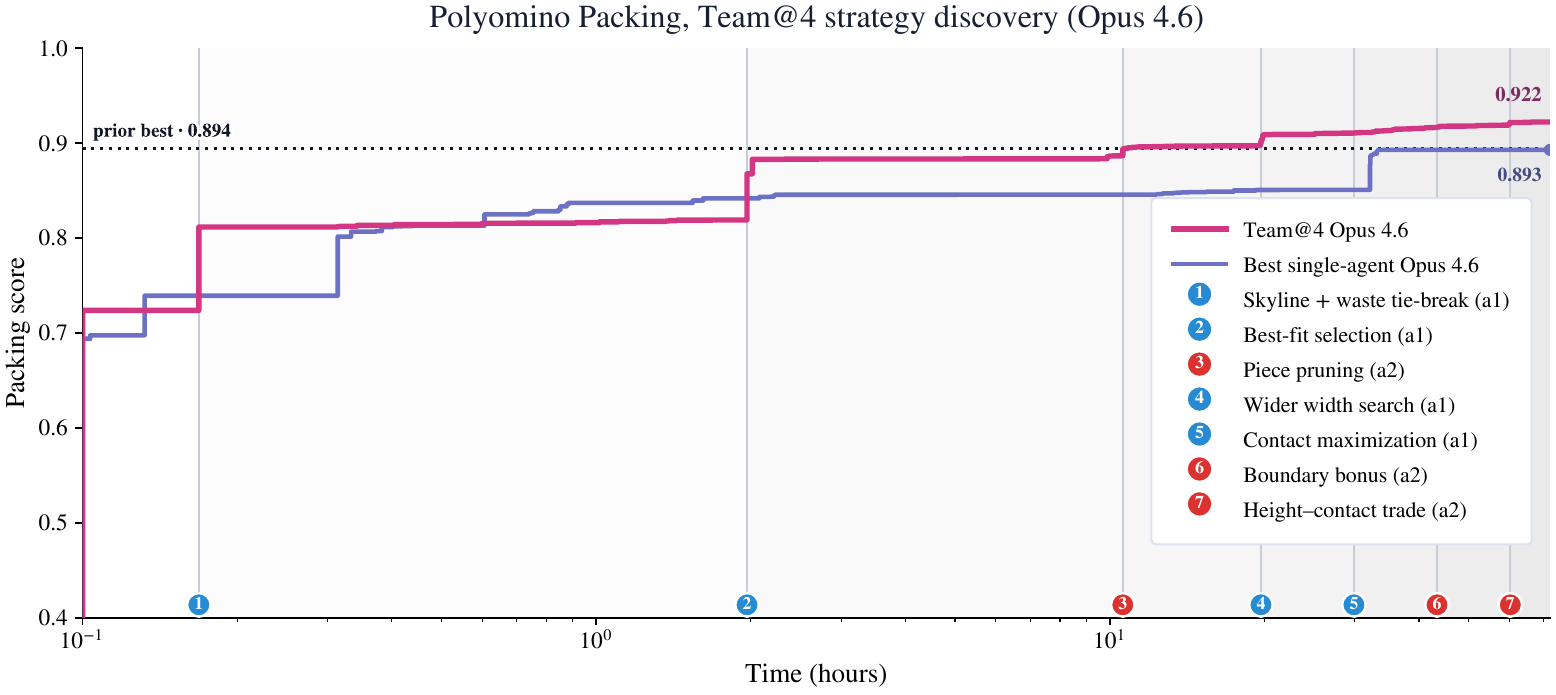}
\caption{\textbf{The Opus 4.6 team sustains its advantage over a longer horizon.}
Best-so-far packing score for the highest-scoring run among 12 single-agent trials and
two team@4 trials, each allowed up to 72 hours. Time is shown on a logarithmic axis.
The team takes a lasting lead at about two hours and reaches $0.922$, while the
single agent plateaus at $0.893$, just below the dotted prior-best reference of $0.894$.
Markers and shading follow Figure~\ref{fig:poly-timeline}.}
\label{fig:poly-opus}
\end{figure}

\begin{figure}[t!]
\centering
\includegraphics[width=\linewidth]{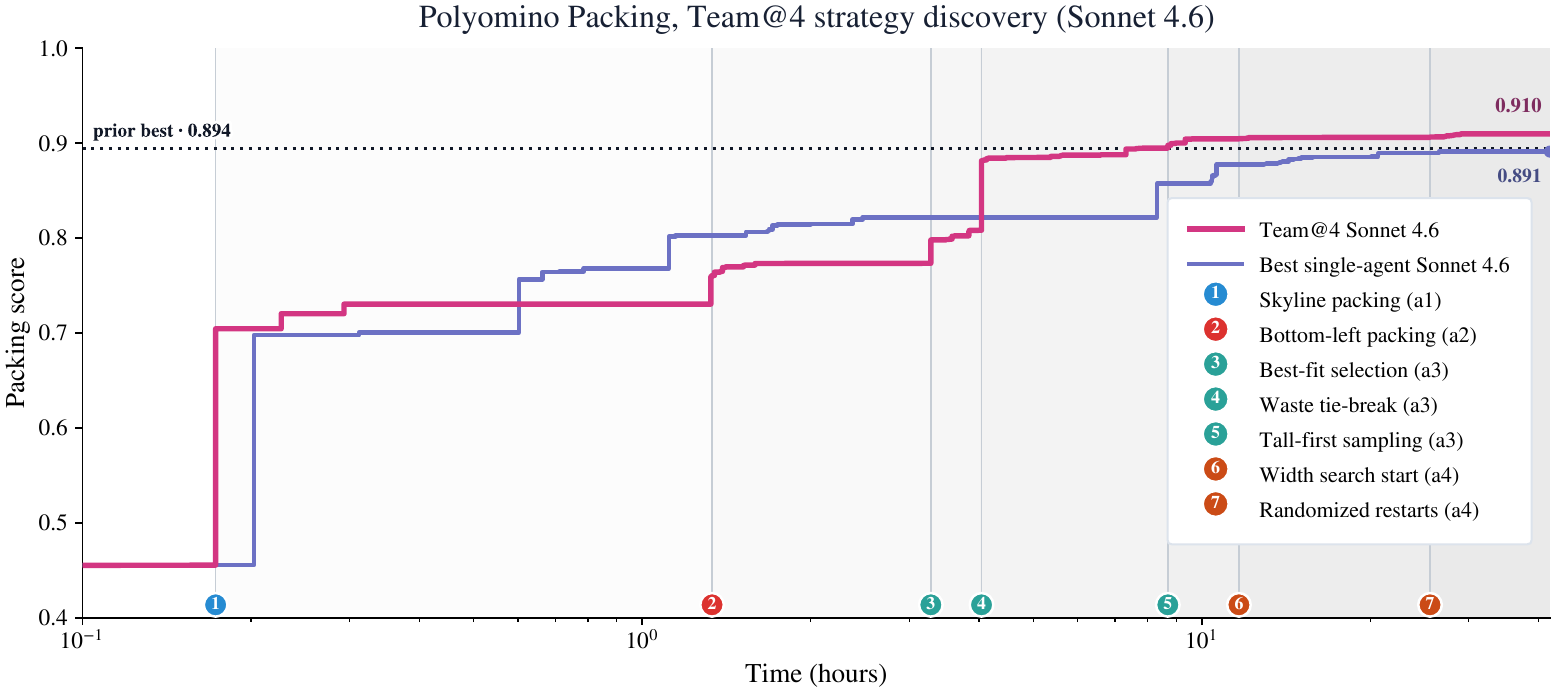}
\caption{\textbf{The longer Sonnet 4.6 team run also clears the single-agent plateau.}
Best-so-far packing score for the highest-scoring run among 12 single-agent trials and
two team@4 trials, each allowed up to 72 hours, with the selected team trajectory ending
at roughly 42 hours. Time is shown on a logarithmic axis.
The team takes a lasting lead at roughly four hours and reaches $0.910$, while the
single agent finishes at $0.891$. The dotted line marks the prior best of $0.894$.
Markers and shading follow Figure~\ref{fig:poly-timeline}.}
\label{fig:poly-sonnet}
\end{figure}

\paragraph{Official Frontier-CS results.}
Figure~\ref{fig:poly-timeline} shows that both runs reach roughly $0.82$ within the first hour, 
but the team breaks away at about 1.6 hours and continues improving as the single agent plateaus.
The final gap, $0.945$ versus $0.883$, more than halves the mean fraction of unused area, 
from $11.7\%$ to $5.5\%$, and carries a team of Sonnet 4.6 agents past the prior best of $0.894$.
The team’s advantage grows as the single agent’s progress slows, resembling the larger gap 
between team and independent agents at deeper levels of \texttt{ARC-AGI-3}.

\paragraph{Long-horizon results.}
Figures~\ref{fig:poly-opus} and~\ref{fig:poly-sonnet} show that the gap survives this
extension. Independent agents do improve further, but their best scores settle at
$0.893$ and $0.891$, still below the prior best. The teams take lasting leads at
roughly two and four hours, respectively, and continue to $0.922$ and $0.910$.
(The fact that team scores are lower than 0.945 is likely due to the small number of trials.)
Thus, the previous short timeframe does not fully explain the single-agent plateau.
These runs demonstrate that communication also outperforms independent agents even in
long-horizon settings, as long as communicating agents have enough time and compute to 
explore and share their discoveries.

\subsubsection{Qualitative Case Study on Communication}
\label{sec:polyomino-long}

In this subsection, we examine the successful three-hour Sonnet run that set a new state-of-the-art 
score of $0.945$ to qualitatively understand and assess how good communication leads to breakthroughs.
With the help of Opus 5, we analyze the run's traces and identify the key discoveries 
that led to the final score. We follow the numbered discoveries in
Figure~\ref{fig:poly-timeline}, using the agent labels shown in the figure.

\begin{figure}[t]
\centering
\includegraphics[width=0.86\linewidth]{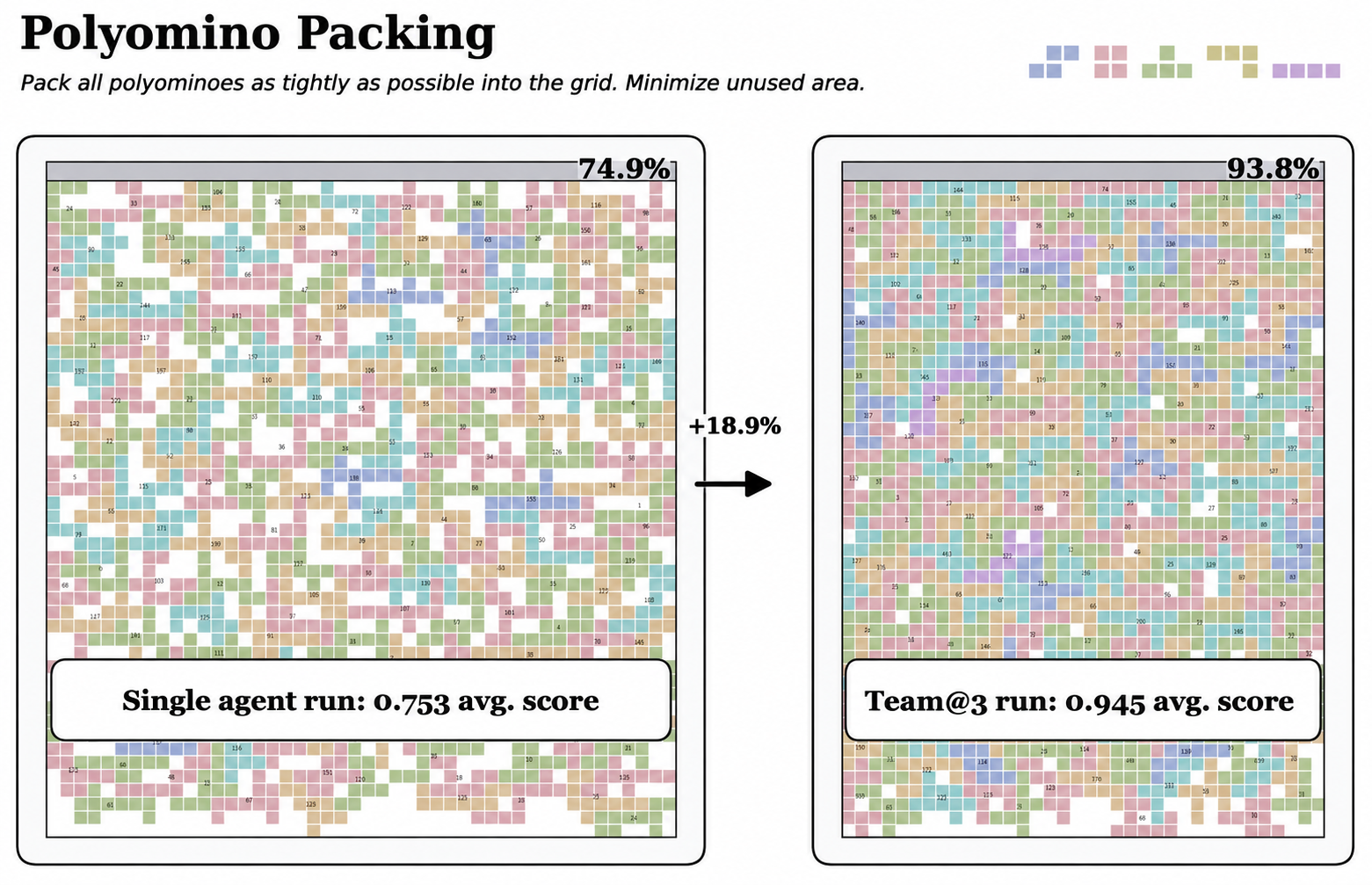}
\caption{\textbf{Communication yields a denser polyomino packing.}
The single-agent solution and the best Sonnet team@3
solution pack the same 172 pieces (1,654 cells) at $74.9\%$ and $93.8\%$ density,
respectively, a gain of 18.9 percentage points.
The labels $0.753$ and $0.945$ are the runs' benchmark-average scores.}
\label{fig:poly-before-after}
\end{figure}

\paragraph{From compact placement to filling gaps.}
Agent \textcolor[HTML]{268BD2}{\textbf{\texttt{a1}}} begins with \emph{shelf packing}, arranging pieces in rows, then
\emph{skyline packing}, placing them along the upper contour of the occupied region.
Agent \textcolor[HTML]{DC322F}{\textbf{\texttt{a2}}}'s \emph{bottom-left packing} searches for the lowest legal placement,
breaking ties toward the left. These methods successively lead the run at markers
1--3 in Figure~\ref{fig:poly-timeline}, but stall near $0.80$. 
Keeping placements low still leaves enclosed gaps that later pieces cannot fill.
Agent \textcolor[HTML]{268BD2}{\textbf{\texttt{a1}}} identifies this limitation and proposes \emph{contact maximization},
favoring placements that touch more already-occupied cells. Contact serves as a local
measure of fit, encouraging pieces to nest together rather than merely keeping the
current height small.

\paragraph{Building upon another agent's failed idea.}
The proposal for contact maximization appears at marker 4, but 
\textcolor[HTML]{268BD2}{\textbf{\texttt{a1}}}'s first implementation is slow and
the score barely changes for roughly half an hour. Agent \textcolor[HTML]{DC322F}{\textbf{\texttt{a2}}} 
then rebuilds the method from \textcolor[HTML]{268BD2}{\textbf{\texttt{a1}}}'s 
message in the communication logs without reading its code, describing the rule
as counting ``how many adjacent cells are already occupied for each piece that fits''
and choosing the highest-contact candidate. This independent implementation and continued
efficiency improvements raise
the score to nearly $0.87$ after marker 5.
Agent \textcolor[HTML]{DC322F}{\textbf{\texttt{a2}}} then evaluates every piece orientation within 
the placement lookahead, comparing how different orientations fit against
the existing packing. Running this \emph{all-orientation search} at full strength
produces the decisive jump past $0.90$ at marker 6.

The difficulty is making that search affordable within the evaluator's two-second
limit per instance. Examining more orientations improves placement quality but risks
a timeout, while conservative search budgets leave useful candidates unexplored.
The original traces show other teams considering the same search but struggling
with timeouts and keeping its budget too small. The strongest single agent also
explores contact maximization without reaching the same quality in three hours.
The successful transfer therefore includes substantial implementation work.
Agent \textcolor[HTML]{DC322F}{\textbf{\texttt{a2}}} takes a shared scoring idea and makes a more thorough search
practical under the execution limit.

\paragraph{Extending the shared objective.}
With the run above $0.90$, \textcolor[HTML]{2AA198}{\textbf{\texttt{a3}}} adds a \emph{boundary bonus} at marker 7.
The contact score now counts box walls as well as occupied cells, rewarding pieces
that fit snugly against edges and into corners. This extends the same objective
developed by \textcolor[HTML]{268BD2}{\textbf{\texttt{a1}}} and implemented by \textcolor[HTML]{DC322F}{\textbf{\texttt{a2}}}, and subsequent refinements
bring the run to $0.945$. The sequence makes the dependence between contributions
concrete. One agent identifies a better placement criterion, another turns it into
an effective search, and a third improves the criterion on top of that implementation.

Figure~\ref{fig:poly-before-after} illustrates the resulting improvement on one
172-piece instance. The team packs it at $93.8\%$ density, compared with $74.9\%$
for an illustrative single-agent solution, leaving far fewer gaps between pieces.
This single-agent run averages $0.753$ across the benchmark and is distinct from the
strongest run in Figure~\ref{fig:poly-timeline}.

\paragraph{Related exchanges over longer horizons.}
In the Opus run (Figure~\ref{fig:poly-opus}), \textcolor[HTML]{268BD2}{\textbf{\texttt{a1}}} introduces
\emph{best-fit selection}, comparing remaining pieces and candidate positions before
committing the best placement. Agent \textcolor[HTML]{DC322F}{\textbf{\texttt{a2}}} improves it and reduces its cost by
restricting the candidate pool to the largest piece sizes. Agent \textcolor[HTML]{268BD2}{\textbf{\texttt{a1}}} spends
the saved time searching more box widths, carrying the score past $0.90$.
A later contribution from \textcolor[HTML]{DC322F}{\textbf{\texttt{a2}}} changes how height and contact interact.
Instead of considering contact only after minimizing height, it scores their
weighted sum, allowing a slightly taller placement if it gains enough contact.
The other agents incorporate the change within an hour. Here, one agent's speed
improvement enables another's broader search, and the final refinement relaxes a
placement rule that earlier methods treated as fixed.

The Sonnet run (Figure~\ref{fig:poly-sonnet}) again develops through agents that did
not supply the initial solution. After \textcolor[HTML]{268BD2}{\textbf{\texttt{a1}}}'s skyline and \textcolor[HTML]{DC322F}{\textbf{\texttt{a2}}}'s
bottom-left methods, \textcolor[HTML]{2AA198}{\textbf{\texttt{a3}}} introduces best-fit selection to minimize the
increase in box height. It then breaks equal-height ties by wasted space,
producing the sharp jump near four hours, and adds tall-first sampling to pass $0.90$.
Agent \textcolor[HTML]{CB4B16}{\textbf{\texttt{a4}}} subsequently extends \textcolor[HTML]{DC322F}{\textbf{\texttt{a2}}}'s width heuristic.
When \textcolor[HTML]{268BD2}{\textbf{\texttt{a1}}} identifies unproductive parts of the deterministic width sweep,
\textcolor[HTML]{CB4B16}{\textbf{\texttt{a4}}} reallocates that time to randomized restarts.
As in the three-hour case, progress comes from successive changes to a shared
algorithm, including improvements to where it spends its limited search time.

\begin{takeaway}{Test-time communication breaks through single-agent performance plateaus}
For the polyomino packing task, single agents plateau near the prior best while four-agent
teams continue improving. The Opus team reaches $0.922$ rather than $0.893$,
and the Sonnet team reaches $0.910$ rather than $0.891$ within 72 hours.
The three-hour experiment shows that communicating teams find a better solution
than independent attempts with the same total agent-hour budget and establishes
a new state-of-the-art score of $0.945$.
After an initial coordination cost, progress proceeds as a relay in which different agents
introduce and refine the methods that move the shared solution forward.
\end{takeaway}

\subsection{MNIST Classifier Compression}
\label{sec:mnist}

\begin{figure}[h!]
    \centering
    \makebox[\linewidth][c]{%
    \includegraphics[width=1.0\linewidth]{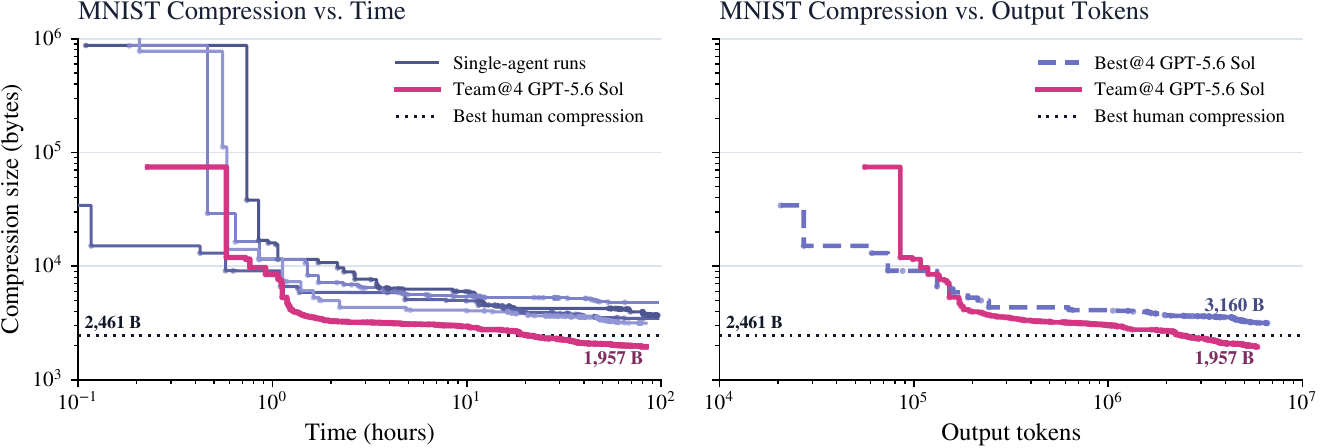}%
    }
\caption{\textbf{Communication leads to state-of-the-art ML compression.} 
Smallest submission with at least 99.4\% test accuracy over 96 hours, 
against wall-clock time and against output tokens, using GPT-5.6 Sol agents.
The advantage does not come from aggregating more attempts. 
All independent runs converge above the 2,461-byte best-known
human solution, while the team reaches 1,957 bytes. The team pays the coordination tax
here too, trailing for about the first hour and 100K output tokens.}
\label{fig:mnist-compression}
\end{figure}

We next move from abstract games and competitive programming style problems to a more realistic 
ML research engineering task. The main question here is whether communication benefits extend
to realistic multi-day engineering and how well communicating agents perform relative to the best-known human solutions.
In MNIST classifier compression, agents must produce the
smallest possible classifier that achieves at least $99.4\%$ test accuracy by minimizing the
compressed size of the complete submission that includes its inference code and model weights
in a 96-hour timeframe. The task combines empirical model development with 
deployment engineering, since agents must preserve accuracy
while compressing both the learned parameters and the code needed to use them.

Figure~\ref{fig:mnist-compression} plots the smallest qualifying submission found so far against
wall-clock time and total output tokens, with lower values indicating better compression. 
In the lower compute regime, the independent agents lead. 
They produce submissions below 15K bytes before
the team, which remains near 75K bytes until roughly the first half hour. Team@4 catches the
independent frontier after about one hour and 100K output tokens, then stays ahead as further
improvements steadily reduce its submission size. 
Communication therefore incurs an initial
communication cost, but converts later compute into more effective progress.

The difference becomes decisive over the full 96-hour run. The independent runs converge between
roughly 3KB and 5KB, with best@4 finishing at 3,160 bytes, and none crosses the 2,461-byte
best-known human result. 
Team@4 crosses that threshold after roughly 20 hours and two million output
tokens, then continues to 1,957 bytes, which is about $20\%$ smaller than the best-known human
submission. Because the advantage persists when progress is plotted
against total output tokens, it cannot be explained by the team merely generating more tokens.
Instead, test-time communication allows the four agents' work to be more effective by
diversifying their approaches, sharing ideas, and building on each other's progress. 
While it is certainly possible that a stronger compression could be found given more time or trials,
the result is a new state-of-the-art in MNIST classifier compression.

\subsubsection{Qualitative Study on Communication}
\label{sec:mnist-communication}

\paragraph{Finding a model worth sharing.}
The four agents begin with different bets. Agent \textcolor[HTML]{268BD2}{\textbf{\texttt{a1}}}
tries digit prototypes, \textcolor[HTML]{DC322F}{\textbf{\texttt{a2}}} fixed gradient features,
\textcolor[HTML]{2AA198}{\textbf{\texttt{a3}}} spectral models and then binary networks,
and \textcolor[HTML]{CB4B16}{\textbf{\texttt{a4}}} distilled CNNs.
After its prototype classifier misses the accuracy threshold,
\textcolor[HTML]{268BD2}{\textbf{\texttt{a1}}} changes the compression strategy.
A recurrent CNN reuses one convolutional cell nine times, buying depth without
storing a fresh set of filters at every step. Its 5,329-byte qualifying submission
persuades the other three agents to adopt this architecture within ten minutes.
Their separate searches now improve different parts of a shared model.

\paragraph{Giving lost features a way around the bottleneck.}
\textcolor[HTML]{2AA198}{\textbf{\texttt{a3}}} targets the final scoring layer, also called the classification head.
Its $10\times128$ matrix converts 128 features into ten digit scores using
1,280 learned parameters, plus ten bias parameters. It first compresses the 128 input features into eight learned
weighted sums before computing the ten digit scores. This stalls at
$99.30\%$ validation accuracy. To recover information lost in that compression,
it also sends selected original features directly to the final scoring layer.
Figure~\ref{fig:mnist-qualitative}(b) shows these two paths in red and teal. With five original features selected
using their covariance on the training data, the model reaches $99.42\%$
even before further fitting.
The resulting 3,120-byte model qualifies, and
\textcolor[HTML]{268BD2}{\textbf{\texttt{a1}}} explicitly adopts the design.
\textcolor[HTML]{CB4B16}{\textbf{\texttt{a4}}} subsequently combines seven learned weighted sums
with seven directly selected features at 2,790 bytes. Passing original features
to the scoring layer helps recover accuracy as the learned matrix shrinks.

\paragraph{Learning that fewer parameters can cost more bytes.}
Later, \textcolor[HTML]{DC322F}{\textbf{\texttt{a2}}} reduces the learned weighted sums to six on the
improved backbone. Additional directly selected features recover accuracy,
but the new parameters initially require more compressed bytes than those
of the larger head they replace.
\textcolor[HTML]{DC322F}{\textbf{\texttt{a2}}} finds that their values are less repetitive. It trains the projection
parameters using only the stored integers $-2,-1,0,1,2$, multiplied by a
learned scale factor for each weighted sum. With ten directly selected
features, this produces a qualifying 1,983-byte submission. Reducing the
number of parameters works together with restricting their stored values.

\paragraph{Rescuing a branch that has fallen behind.}
Meanwhile, \textcolor[HTML]{268BD2}{\textbf{\texttt{a1}}} makes the last two spatial
stages share their channel offsets. Its 1,993-byte model qualifies, but the
shared best has already reached 1,970 bytes.
Rather than discard the branch, it combines this backbone with
\textcolor[HTML]{DC322F}{\textbf{\texttt{a2}}}'s design with six learned weighted sums,
using routines originally written by \textcolor[HTML]{268BD2}{\textbf{\texttt{a1}}}
and extended by \textcolor[HTML]{DC322F}{\textbf{\texttt{a2}}}. Variants that pass ten or eleven original features directly to the scoring
layer fail the test threshold. Passing twelve succeeds while keeping the
projection parameters restricted to the same five stored integer values,
and \textcolor[HTML]{268BD2}{\textbf{\texttt{a1}}} advances the combined model to
1,960 and then 1,959 bytes. Figure~\ref{fig:mnist-qualitative}(a)
shows the adoption statements connecting these two branches.
\textcolor[HTML]{DC322F}{\textbf{\texttt{a2}}} then reorders the directly selected features and their matching scoring parameters to
reach 1,957 bytes without changing predictions. The final architecture emerges
from joining two branches, including one that could no longer win on its own.

\input{figures/mnist_qualitative}

\subsubsection{The 1,957-Byte MNIST Classifier}

\paragraph{Architecture.}
Figure~\ref{fig:mnist-qualitative}(b) shows a recurrent convolutional network
with 2,900 quantized parameters and 30 scale factors. The input is cyclically
shifted upward by one pixel. A $3\times3$ convolution maps the single input
channel to 32 channels, followed by SiLU to form $h_0$. Nine residual updates then reuse
the same depthwise $3\times3$ filters $D$ and pointwise $1\times1$ filters $C$.
For update $t$, the feature maps change according to
\begin{equation}
  \widetilde h_t=\operatorname{SiLU}\!\big(
    h_{t-1}
    +g_t\odot\operatorname{GN}_4\!\left(C\bigl(\operatorname{SiLU}(D(h_{t-1}))\bigr)\right)
    +b_t\big).
  \label{eq:mnist-residual}
\end{equation}
Here $D$ filters each channel separately, $C$ mixes channels, and
$\operatorname{GN}_4$ is GroupNorm with four groups, no learned affine
parameters, and $\epsilon=10^{-5}$. Each $g_t$ contains 32 learned gains,
broadcast over spatial positions. The offset $b_t$ is the 32-parameter vector
$A$ for updates 1--3 and the shared vector $B$ for updates 4--9.
The shortcut adds the unchanged input to the learned correction, making
these updates residual. All convolutions have stride one and no bias,
with padding one for $3\times3$ filters. After updates 3 and 6, $2\times2$
max-pooling with stride two forms $h_t$ from $\widetilde h_t$.
Otherwise $h_t=\widetilde h_t$. This gives three updates at each of
$28\times28$, $14\times14$, and $7\times7$ resolution.

After update 9, $4\times4$ average pooling with stride three produces
$32\times2\times2$ features, flattened in channel-first order into
$x\in\mathbb{R}^{128}$. The scoring layer combines six learned weighted
sums $Px$ with twelve directly selected features $x_S$,
\begin{equation}
  \mathrm{scores}=W\begin{bmatrix}Px\\x_S\end{bmatrix},
  \qquad P\in\mathbb{R}^{6\times128},
  \quad W\in\mathbb{R}^{10\times18}.
  \label{eq:mnist-head}
\end{equation}
The ordered, zero-based indices are
$S=(2,32,1,40,102,30,124,48,93,122,53,126)$.
These features are selected greedily using training feature statistics to recover
scoring information lost in the six-component approximation, and their indices
remain fixed for every image at inference.
Neither scoring matrix has a bias, and the largest score determines the digit.
The convolutional network has 1,952 parameters and the scoring matrices have
948. Filter reuse supplies depth without storing nine separate filter sets,
while the gains and offsets allow the updates to behave differently.

\paragraph{Training for compression.}
\textcolor[HTML]{DC322F}{\textbf{\texttt{a2}}}'s smaller scoring layer initially
costs more compressed bytes, motivating a change in how its weights are trained.
Quantization-aware training rounds weights during each forward pass while
learning the underlying weights and scales, restricting the 768 projection
parameters to scaled integers $-2,-1,0,1,2$. A small weight penalty
encourages small values alongside the classification and teacher-matching losses.
This makes the smaller scoring layer competitive by favoring repeated integer
values by exploiting redundancy in the learned weights. Across the final model,
$44.6\%$ of stored integers are $0$ or $\pm1$, measured before applying scales.
The compressed integer payload occupies 1,262 bytes, compared with 1,390 bytes
for fixed-width packing of the same values using each tensor's range.

The human reference, \texttt{tiny\_MNIST}~\citep{gandhi2024tinymnist}, uses
five separate $3\times3$ convolutions with widths $6,6,8,8,10$, BatchNorm,
ReLU, and a dense $90$-to-$10$ head. It has 3,130 parameters before
BatchNorm folding. Our reproduction folds BatchNorm into the convolutions
and applies per-channel 4-bit quantization, yielding 2,461 bytes at
$99.40\%$ accuracy. The agents' solution is 504 bytes
($20.5\%$) smaller and achieves $99.41\%$ test accuracy.

\begin{takeaway}{Test-time communication makes effective progress over a multi-day horizon and 
produces state-of-the-art ML compression results}
Team@4 establishes the best known MNIST classifier compression with a
1,957-byte submission that preserves at least $99.4\%$ test accuracy, about $20\%$
smaller than the best-known human solution. Despite trailing for roughly the first
hour and 100K output tokens, the communicating team overcomes its early coordination
cost over the 96-hour run and surpasses a threshold that no independent
run reaches, with best@$4$ finishing at 3,160 bytes.
\end{takeaway}

\subsection{Considerations for Effective Multi-Agent Communication}
\label{sec:effective-communication}

Prior work finds that tasks admitting decomposition into independent subtasks are
particularly amenable to multi-agent collaboration~\citep{kim2025scienceofscaling,gu2025agentgroupchatv2}.
Our tasks admit no such decomposition, yet communication helps regardless.
What makes it help is \emph{verified progress sharing}. Once a discovery is confirmed to be
useful, every agent can build on it rather than rediscover it independently, and no agent is
left exploring a direction the others have already exhausted.

In the following discussion, we provide a simple pedagogical model that formalizes this intuition.
We show how it can lead to an exponential separation between communicating and independent agents,
and also provide a small-scale experiment on Terminal-Bench 2.0~\citep{merrill2026terminalbench} 
where team@$k$ improves over single-agent attempts but not independent best@$k$,
illustrating the importance of the availability of an agent-accessible verifier.

\paragraph{From independent trajectories to cumulative progress.}
Consider a task requiring $m$ successive improvements, with an accessible verifier
reporting score $j/m$ after improvement $j$. Assume that continuation difficulty
depends only on the stage, improvements can be transferred freely, and agents conduct
fresh, independent searches after each transfer. Both methods can query the
verifier. Writing $X_{ij}$ for agent $i$'s search time at stage $j$, their completion
times would be
\[
T_{\mathrm{best}} = \min_{i\le k}\sum_{j=1}^{m}X_{ij}, \qquad
T_{\mathrm{team}} = \sum_{j=1}^{m}\min_{i\le k}X_{ij}.
\]
Thus $T_{\mathrm{team}}\le T_{\mathrm{best}}$. Independent sampling needs one agent
to complete the entire sequence quickly, whereas a team can use a different
agent's breakthrough at each stage. Communication changes a \emph{minimum of
sums} into a \emph{sum of minima}.
Figure~\ref{fig:verified-progress-sharing} illustrates this effect visually.

Suppose $X_{ij}\overset{\mathrm{iid}}{\sim}\operatorname{Exp}(\lambda)$,
which is the standard constant-rate, memoryless model used
in queueing theory \citep{shortle2018fundamentals}. Here we use this model as a pedagogical example, 
not an empirical claim about agents. The first of $k$ independent
discoveries arrives at rate $k\lambda$, giving
\[
T_{\mathrm{team}}\sim\operatorname{Erlang}(m,k\lambda),
\qquad
\mathbb{E}T_{\mathrm{team}}=\frac{m}{k\lambda}.
\]
The Erlang law (a Gamma distribution with integer shape) follows from summing
$m$ exponential waiting times. Doubling $k$ therefore halves the team's mean completion time.

Now give each agent runtime $\tau=\alpha m/\lambda$, where $1/k<\alpha<1$.
This is shorter than one agent's mean completion time but longer than the team's.
Both methods receive the same $k\tau$ agent-time budget.

\begin{figure}[t!]
\centering
\begin{tikzpicture}[
    x=0.68cm,y=1cm,
    >={Stealth[length=3mm]},
    agent/.style={line width=1.6pt, -{Stealth[length=3mm]}},
    divider/.style={gray!70, dashed, line width=0.9pt},
    boundary/.style={black, line width=1pt},
    lab/.style={font=\small},
    panellab/.style={font=\bfseries}
]

\definecolor{agentmagenta}{HTML}{D33682}
\definecolor{agentpurple}{HTML}{6C71C4}
\definecolor{agentteal}{HTML}{2AA198}

\begin{scope}[shift={(0,0)}]
    \node[panellab, anchor=west] at (-0.2,1.45) {(a) Independent agents};

    \draw[boundary] (0,-0.2) -- (0,1.0);
    \draw[boundary] (8,-0.2) -- (8,1.0);

    \node[lab, below] at (0,-0.2) {Start};
    \node[lab, below] at (8,-0.2) {Finish};

    \draw[agent, draw=agentmagenta] (0,0.75) -- (5.2,0.75);
    \draw[agent, draw=agentpurple]  (0,0.40) -- (8.0,0.40);
    \draw[agent, draw=agentteal]    (0,0.05) -- (6.6,0.05);

    \node[lab, text=agentmagenta, left] at (-0.1,0.75) {$a_1$};
    \node[lab, text=agentpurple,  left] at (-0.1,0.40) {$a_2$};
    \node[lab, text=agentteal,    left] at (-0.1,0.05) {$a_3$};
\end{scope}

\begin{scope}[shift={(10.5,0)}]
    \node[panellab, anchor=west] at (-0.2,1.45) {(b) Communicating agents};

    \draw[boundary] (0,-0.2) -- (0,1.0);
    \draw[boundary] (8,-0.2) -- (8,1.0);

    \draw[divider] (1.8,-0.2) -- (1.8,1.0);
    \draw[divider] (4.4,-0.2) -- (4.4,1.0);
    \draw[divider] (6.1,-0.2) -- (6.1,1.0);

    \node[lab, below] at (0,-0.2) {Start};
    \node[lab, below] at (8,-0.2) {Finish};

    \draw[agent, draw=agentmagenta] (0.0,0.75) -- (1.8,0.75);
    \draw[agent, draw=agentpurple]  (0.0,0.40) -- (1.1,0.40);
    \draw[agent, draw=agentteal]    (0.0,0.05) -- (0.8,0.05);

    \draw[agent, draw=agentmagenta] (1.8,0.75) -- (3.2,0.75);
    \draw[agent, draw=agentpurple]  (1.8,0.40) -- (4.4,0.40);
    \draw[agent, draw=agentteal]    (1.8,0.05) -- (3.5,0.05);

    \draw[agent, draw=agentmagenta] (4.4,0.75) -- (5.4,0.75);
    \draw[agent, draw=agentpurple]  (4.4,0.40) -- (5.6,0.40);
    \draw[agent, draw=agentteal]    (4.4,0.05) -- (6.1,0.05);

    \draw[agent, draw=agentmagenta] (6.1,0.75) -- (8.0,0.75);
    \draw[agent, draw=agentpurple]  (6.1,0.40) -- (7.0,0.40);
    \draw[agent, draw=agentteal]    (6.1,0.05) -- (7.4,0.05);

    \node[lab, text=agentmagenta, left] at (-0.1,0.75) {$a_1$};
    \node[lab, text=agentpurple,  left] at (-0.1,0.40) {$a_2$};
    \node[lab, text=agentteal,    left] at (-0.1,0.05) {$a_3$};
\end{scope}

\end{tikzpicture}
\caption{
\textbf{Verified progress sharing allows a team to advance faster.}
\textbf{(a)} Under independent parallel sampling, each agent advances only along its own trajectory, 
so success requires at least one agent to complete the entire trajectory independently.
\textbf{(b)} By sharing verified progress, the task is divided into successive phases, and at each checkpoint, 
all agents continue from the improved state, where a different agent may supply the next breakthrough.
}
\label{fig:verified-progress-sharing}
\end{figure}

\begin{proposition}[Exponential separation from progress sharing]
\label{prop:verified-progress}
Under this model, writing $I(a)=a-1-\log a$,
\[
\begin{aligned}
\Pr(T_{\mathrm{team}}\le\tau)
&\ge 1-e^{-mI(k\alpha)}, \;
\Pr(T_{\mathrm{best}}\le\tau)
&\le k e^{-mI(\alpha)}.
\end{aligned}
\]
For fixed $k,\alpha$, team@$k$ approaches one exponentially in $m$, while
best@$k$ success approaches zero exponentially.
\end{proposition}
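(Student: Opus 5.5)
Both bounds are Chernoff bounds for Gamma (Erlang) sums. The team bound is a lower-tail statement, and the best@$k$ bound is a union bound over agents combined with an upper-tail statement for a single agent's sum.

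\textbf{Team bound.} By the model, $T_{\mathrm{team}}=\sum_{j=1}^m Y_j$ with $Y_j=\min_i X_{ij}\overset{\mathrm{iid}}{\sim}\operatorname{Exp}(k\lambda)$. Its mean is $m/(k\lambda)$, while $\tau=\alpha m/\lambda$ equals $k\alpha$ times that mean. Since $k\alpha>1$, the event $T_{\mathrm{team}}>\tau$ is an upper-tail deviation.

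For $0<\theta<k\lambda$, Markov's inequality applied to $e^{\theta T}$ gives
\[
\Pr(T_{\mathrm{team}}>\tau)\le \Bigl(\tfrac{k\lambda}{k\lambda-\theta}\Bigr)^{m}e^{-\theta\tau}.
\]
Write $\theta=k\lambda(1-s)$. The exponent becomes $m\,[-\log s-k\alpha(1-s)]$. Minimizing over $s$ gives $s=1/(k\alpha)\in(0,1)$, which is admissible. The resulting exponent is $-m\,(k\alpha-1-\log(k\alpha))=-mI(k\alpha)$. Taking complements yields $\Pr(T_{\mathrm{team}}\le\tau)\ge 1-e^{-mI(k\alpha)}$.

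\textbf{Best@$k$ bound.} Here $T_{\mathrm{best}}\le\tau$ iff some agent has $S_i=\sum_j X_{ij}\le\tau$. A union bound gives $\Pr(T_{\mathrm{best}}\le\tau)\le k\Pr(S_1\le\tau)$.

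$S_1$ is Erlang$(m,\lambda)$ with mean $m/\lambda$, and $\tau$ is $\alpha<1$ times that mean, so this is a lower-tail deviation. For $\theta>0$, apply Markov to $e^{-\theta S_1}$:
\[
\Pr(S_1\le\tau)\le e^{\theta\tau}\Bigl(\tfrac{\lambda}{\lambda+\theta}\Bigr)^m.
\]
Write $\theta=\lambda(s-1)$ with $s>1$. The exponent is $m\,[\alpha(s-1)-\log s]$. It is minimized at $s=1/\alpha>1$, which is admissible because $\alpha<1$. The minimum value is $-m(\alpha^{-1}\cdot\alpha-\alpha... )$; computing it explicitly gives $m[1-\alpha+\log\alpha]=-mI(\alpha)$. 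Hence $\Pr(T_{\mathrm{best}}\le\tau)\le k e^{-mI(\alpha)}$.

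\textbf{Exponential rates.} It remains to note that $I(a)=a-1-\log a>0$ for $a\ne1$, by strict convexity of $-\log$ (equivalently $\log a<a-1$). Since $k\alpha>1$ and $\alpha<1$, both exponents $I(k\alpha)$ and $I(\alpha)$ are strictly positive constants for fixed $k,\alpha$. This gives the claimed convergence of the team probability to one and the best@$k$ probability to zero, both exponentially in $m$.

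The only delicate point is the choice of tilting parameter. In each case we must check that the optimizer lies in the admissible range: $\theta<k\lambda$ for the team bound, and $\theta>0$ for the best@$k$ bound. This is exactly where the assumption $1/k<\alpha<1$ is used.
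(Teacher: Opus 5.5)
Your proposal is correct and is essentially the paper's proof: optimally tilted Chernoff bounds for the upper tail of $T_{\mathrm{team}}\sim\operatorname{Erlang}(m,k\lambda)$ and the lower tail of each agent's $\operatorname{Erlang}(m,\lambda)$ completion time, combined with a union bound over the $k$ agents. The paper differs only cosmetically, by rescaling to $\operatorname{Erlang}(m,1)$ and plugging in the tilt $\theta=1-1/a$ for both tails. One minor slip is in your closing remark: what $k\alpha>1$ actually secures for the team tilt is $\theta>0$ (i.e.\ $s<1$), since $\theta<k\lambda$ holds automatically.
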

The separation concerns completion probability at matched runtime, not an
exponential expected-time speedup. The proof is in
Appendix~\ref{app:verified-progress}.
The benefit also depends on preserving diverse continuations. 
Sometimes, communicating agents may converge on a subset of approaches,
which can reduce diversity and slow progress.
To model \emph{herding}, we can imagine the $k$ agents acting as essentially
$g\le k$ independent groups whose members duplicate the same search. 
The team's discovery rate becomes $g\lambda$
and its mean completion time becomes $m/(g\lambda)$. The high-success regime now
requires $g\alpha>1$. Sharing progress helps only to the extent that independent
search survives after sharing.

\paragraph{Ineffective communication in Terminal-Bench.}
Terminal-Bench 2.0 evaluates agents on hard, realistic tasks performed in
command-line environments~\citep{merrill2026terminalbench}. We evaluate Claude
Sonnet 4.6 through the GitHub Copilot CLI on the complete set of 89 tasks. 
Test-time communication consists of two independent team@2 trials, 
where both agents contribute to one final container state. 
The baseline consists of four independent single-agent trials.
This is a small-scale comparison with only a handful of trials,
so we treat it as descriptive evidence rather than a precise estimate of the
gap or its cause.

\begin{table}[t!]
\centering
\small
\caption{\textbf{Communication does not outperform pass@$k$ on Terminal-Bench 2.0.}}
\label{tab:terminalbench}
\begin{tabular}{@{}lcc@{}}
\toprule
Method & Mean Accuracy & Max Accuracy \\
\midrule
Single agent pass@1 & 52.53\% & -- \\
Independent pass@2  & \textbf{62.36\%} & \textbf{64.04\%} \\
Communicating team@2  & 60.67\% & 61.80\% \\
\bottomrule
\end{tabular}
\end{table}

As seen in Table~\ref{tab:terminalbench}, communication improves over an individual attempt, 
but does not outperform independent sampling (pass@$k$) with the same number of agents. 
This result can be partially explained by Proposition~\ref{prop:verified-progress}
when $m$ is effectively $1$. In Terminal-Bench, tasks require long sequences of actions, but
the official verifier runs after agent execution. Available feedback varies across tasks,
from package tests and direct performance measurements to checks of selected requirements.
For example, the public evaluator in \texttt{largest-eigenval} checks the eigenvector
equation and reports timings without verifying that the eigenvalue is dominant.
In \texttt{mailman}, the public evaluator checks subscription but omits announcement
delivery and unsubscription, which the official grader also tests.
Such checks can confirm progress on individual requirements without establishing
overall correctness.

In contrast, the packing and MNIST tasks expose numerical scores that agents can query repeatedly,
while ARC-AGI-3 supplies level-success feedback.
Thus, setting $m=1$ in Proposition~\ref{prop:verified-progress} gives us
$T_{\mathrm{team}}=\min_{i\le k}X_{i1}=T_{\mathrm{best}}$, and therefore shows no communication advantage,
even when the final outcome can be graded objectively. What matters is not evaluation alone,
but dense faithful verification at test-time.

A secondary disadvantage may be the loss of diversity. If communication causes $k$ agents
to behave like only $g<k$ independent groups, then $g$ end-to-end candidates
cover only $1-(1-p)^g$, below pass@$k$'s $1-(1-p)^k$, where $p$ denotes the 
success probability of an independent candidate. 
This penalty is especially relevant here because team@2 contributes one
shared final state, while pass@2 retains two isolated states and receives oracle selection
after evaluation. Premature convergence or interference in the shared state
can therefore make communication slightly worse than independent sampling.
Together, the results suggest that communication is most useful when feedback is accessible
and discriminative, though we do not exclude the possibility that test-time communication
could be useful when feedback is sparse, if the agents can construct their own intermediate 
verification signals or have better communication protocol harnesses.

\begin{takeaway}{Accessible verification helps agents build on shared progress}
Communication can help agents build upon successive breakthroughs when they can objectively
measure which partial solutions are worth building on, provided agents preserve diverse search
directions after sharing. Terminal-Bench 2.0 illustrates the challenges of sharing progress
with only partial access to verification. Available feedback varies
across tasks and may not reliably identify progress worth sharing, which may help explain
why team@2 improves over a single attempt but does not outperform pass@2.
\end{takeaway}

%% file: figures/mnist_qualitative.tex
\begin{figure}[t!]
\centering
\begin{minipage}[t]{0.475\linewidth}
\vspace{0pt}
{\small\textbf{(a) Communication log excerpts}\par}
\vspace{3pt}
\begin{tcolorbox}[colback=blue!3!white,colframe=blue!25!white,
  boxrule=0.5pt,arc=1.5mm,left=2mm,right=2mm,top=2mm,bottom=2mm]
\footnotesize
\textcolor[HTML]{268BD2}{\textbf{\texttt{a1}}}
\textbf{adopting \textcolor[HTML]{859900}{\texttt{a3}}'s smaller scoring layer}\par\smallskip
``slot 3 rank-8-plus-residual classifier factorization [\ldots]
measured 3,117 bytes, 99.50\% dev and 99.41\% sealed test clearly beats the dense-head frontier [\ldots]''
\end{tcolorbox}
{\footnotesize
\textcolor[HTML]{268BD2}{\textbf{\texttt{a1}}} adopts
\textcolor[HTML]{859900}{\textbf{\texttt{a3}}}'s design, combining learned weighted
sums with directly selected features.\par}
\vspace{5pt}
\begin{tcolorbox}[colback=blue!3!white,colframe=blue!25!white,
  boxrule=0.5pt,arc=1.5mm,left=2mm,right=2mm,top=2mm,bottom=2mm]
\footnotesize
\textcolor[HTML]{268BD2}{\textbf{\texttt{a1}}}
\textbf{borrowing \textcolor[HTML]{DC322F}{\texttt{a2}}'s scoring layer}\par\smallskip
``Adopted peer rank6 head after measured 1970-byte/99.42\% sealed frontier
[\ldots] 12 bypasses rather than peer 10.''
\end{tcolorbox}
{\footnotesize
Later, \textcolor[HTML]{268BD2}{\textbf{\texttt{a1}}}'s offset-sharing model
at 1,993 bytes trails the best-so-far of 1,970. Combining it with
\textcolor[HTML]{DC322F}{\textbf{\texttt{a2}}}'s six weighted sums and twelve
directly selected features reaches 1,959 bytes.\par}
\vspace{5pt}
\begin{tcolorbox}[colback=red!3!white,colframe=red!25!white,
  boxrule=0.5pt,arc=1.5mm,left=2mm,right=2mm,top=2mm,bottom=2mm]
\footnotesize
\textcolor[HTML]{DC322F}{\textbf{\texttt{a2}}}
\textbf{taking up the combined model}\par\smallskip
``The new 1,959-byte frontier combines tied stage biases with a
rank-6/twelve-bypass head and seals at 99.41\%. I'm adopting it [\ldots]''
\end{tcolorbox}
{\footnotesize
\textcolor[HTML]{DC322F}{\textbf{\texttt{a2}}} adopts the combined model and
reorders features and matching scoring parameters, reaching
\textbf{1,957 bytes}\par}
\end{minipage}\hfill
\begin{minipage}[t]{0.49\linewidth}
\vspace{0pt}
{\small\textbf{(b) Final classifier architecture}\par}
\vspace{6pt}
\centering
\begin{tikzpicture}[x=1cm,y=-1cm,
  font=\scriptsize,
  flow/.style={-{Stealth[length=1.6mm]},black!65,semithick},
  weight/.style={blue!55!black,densely dashed,thin},
  every node/.style={align=center,inner sep=1pt}]
\path[use as bounding box] (-.15,-.05) rectangle (6.4,10.55);
\fill[black!3] (2.65,0) rectangle (3.45,.8);
\draw[step=.2cm,black!12,very thin] (2.65,0) grid (3.45,.8);
\draw[black!75,line width=2.5pt,line cap=round,line join=round]
  (2.82,.17) -- (3.27,.17) -- (3.02,.64);
\node[anchor=east] at (2.5,.4) {Input\\$28\times28$};
\draw[flow] (3.05,.85) -- (3.05,1.15);
\foreach \k in {2,1,0} {
  \filldraw[fill=orange!15,draw=orange!65!black,thin]
    (2.78+\k*.08,1.2-\k*.06) rectangle (3.22+\k*.08,1.64-\k*.06);
}
\draw[step=.1467cm,orange!65!black,very thin] (2.78,1.2) grid (3.22,1.64);
\node[anchor=west] at (3.7,1.38) {Input filters $3\times3$\\$1\to32$ channels};
\filldraw[fill=blue!2,draw=blue!15,rounded corners=2mm]
  (-.05,2.35) rectangle (6.25,5.25);
\draw[flow] (3.05,1.72) -- (3.05,2.05) -- (.2,2.05) -- (.2,3.38) -- (.49,3.38);
\node[fill=white,text=blue!55!black] at (3.1,2.35) {\textbf{One filter set reused at all 9 steps}};
\foreach \cx/\sz/\res in {.95/.88/28,3.1/.67/14,5.25/.48/7} {
  \foreach \k in {2,1,0} {
    \filldraw[fill=blue!12,draw=blue!50!black,thin]
      (\cx-\sz/2+\k*.07,3.38-\sz/2-\k*.07)
      rectangle (\cx+\sz/2+\k*.07,3.38+\sz/2-\k*.07);
  }
  \draw[flow] (\cx-.35,2.96) .. controls (\cx-.85,2.48) and (\cx+.85,2.48)
    .. (\cx+.4,2.96);
  \node[fill=blue!2] at (\cx,2.68) {$\times3$};
  \node at (\cx,4.03) {$32\times\res\times\res$};
  \draw[weight] (\cx,4.18) -- (\cx,4.45);
}
\draw[flow] (1.58,3.34) -- (2.57,3.34);
\draw[flow] (3.6,3.34) -- (4.85,3.34);
\node at (2.08,3.66) {max-\\pool};
\node at (4.22,3.66) {max-\\pool};
\draw[weight] (.95,4.45) -- (5.25,4.45);
\node[text=blue!60!black] at (1.55,4.88)
  {Depthwise $3\times3$\\filters within channels};
\node[text=blue!60!black] at (4.65,4.88)
  {Pointwise $1\times1$\\mixes channels};
\draw[flow] (2.87,4.76) -- (3.32,4.76);
\draw[flow] (5.63,3.4) -- (6.48,3.4) -- (6.48,5.7) -- (3.1,5.7) -- (3.1,6.03);
\node[fill=white] at (4.87,5.7) {average pool\\$32\times2\times2$};
\foreach \j in {0,...,15} {
  \pgfmathtruncatemacro{\cellshade}{15+mod(\j*13,40)}
  \filldraw[fill=blue!\cellshade,draw=white,line width=.3pt]
    (1.02+\j*.26,6.07) rectangle (1.28+\j*.26,6.3);
}
\node[anchor=east] at (.9,6.18) {128\\features};
\draw[flow] (1.95,6.34) -- (1.38,6.88);
\foreach \row in {0,...,5} {
  \foreach \col in {0,...,7} {
    \pgfmathtruncatemacro{\cellshade}{12+mod(\row*17+\col*11,55)}
    \filldraw[fill=red!\cellshade,draw=white,line width=.2pt]
      (.62+\col*.185,6.94+\row*.11) rectangle (.805+\col*.185,7.05+\row*.11);
  }
}
\node[anchor=east,text=red!65!black] at (.5,7.27) {$P$};
\node[anchor=west] at (2.23,7.27) {$6\times128$};
\foreach \j in {0,...,11} {
  \draw[teal!45,line width=.35pt]
    (3.32+\j*.153,6.33) -- (3.24+\j*.225,8.16);
}
\node[fill=white] at (4.48,7.27) {Select 12};
\foreach \j in {0,...,5} {
  \draw[red!40,line width=.4pt]
    (.72+\j*.25,7.62) -- (.53+\j*.34,8.16);
  \filldraw[fill=red!22,draw=red!60!black,thin]
    (.53+\j*.34,8.23) circle (.075);
}
\foreach \j in {0,...,11} {
  \filldraw[fill=teal!18,draw=teal!65!black,thin]
    (3.24+\j*.225,8.23) circle (.065);
}
\node[text=red!65!black] at (1.38,8.53) {6 weighted sums $Px$};
\node[text=teal!65!black] at (4.48,8.53) {12 original features $x_S$};
\draw[flow] (1.38,8.7) -- (1.38,8.94) -- (2.2,8.94) -- (2.2,9.12);
\draw[flow] (4.48,8.7) -- (4.48,8.94) -- (4.05,8.94) -- (4.05,9.12);
\foreach \j in {0,...,17} {
  \ifnum\j<6\def\featurecolor{red}\else\def\featurecolor{teal}\fi
  \filldraw[fill=\featurecolor!25,draw=white,line width=.3pt]
    (1.02+\j*.23,9.16) rectangle (1.25+\j*.23,9.37);
}
\draw[flow] (3.1,9.4) -- (3.1,9.77);
\node[anchor=west] at (3.35,9.6) {$W:$ $18\to10$};
\foreach \j/\h in {0/.08,1/.12,2/.18,3/.11,4/.07,5/.13,6/.09,7/.44,8/.16,9/.1} {
  \ifnum\j=7\def\barcolor{orange!80}\else\def\barcolor{black!20}\fi
  \fill[\barcolor] (1.73+\j*.27,10.24) rectangle (1.9+\j*.27,10.24-\h);
  \node[font=\tiny,anchor=north] at (1.81+\j*.27,10.28) {\j};
}
\node[anchor=east] at (1.57,10.03) {Scores};
\node[anchor=west,text=orange!70!black] at (4.57,10.03) {argmax};
\end{tikzpicture}
\end{minipage}
\caption{\textbf{Combining complementary ideas produces the final MNIST classifier.}
\textbf{(a)} Two shared-log entries from \textcolor[HTML]{268BD2}{\textbf{\texttt{a1}}} and a subsequent excerpt
from \textcolor[HTML]{DC322F}{\textbf{\texttt{a2}}} show successive adoption of \textcolor[HTML]{859900}{\textbf{\texttt{a3}}}'s and \textcolor[HTML]{DC322F}{\textbf{\texttt{a2}}}'s scoring-layer
designs, followed by \textcolor[HTML]{DC322F}{\textbf{\texttt{a2}}} taking up the combined model.
\textbf{(b)} The same filter set,
comprising depthwise $3\times3$ and pointwise $1\times1$ convolution filters,
updates the 32 feature maps in three loops at each resolution (see Eq.~\ref{eq:mnist-residual}).
The scoring layer combines six learned weighted sums with twelve selected features.
The final network contains 2,900 quantized parameters and 30 float16
scale factors.}
\label{fig:mnist-qualitative}
\end{figure}

%% file: sections/related.tex
\section{Related Work}

\paragraph{Multi-agent systems.}
Multi-agent systems (MAS) are becoming increasingly popular and widely adopted.
As discussed in the introduction, some prior work~\citep{du2024multiagentdebate,chen2024reconcile} 
reports gains from multi-agent debate and belief exchange.
However, deliberation without a verifier or external feedback can amplify 
correlated errors or move the team away from its strongest member
\citep{smit2024goingmad,pappu2026experts}. 
In fact, independent sampling followed by aggregation remains a strong baseline
\citep{wang2023selfconsistency,li2024moreagents,choi2025debateorvote}. 
Benefits also depend on task structure and model strength.
\citet{kim2025scienceofscaling} find stronger gains on decomposable tasks, 
but these gains diminish as the underlying model becomes stronger with respect to the task.
While \citet{kim2025scienceofscaling} comprehensively cover a range of tasks and configurations,
our work studies the benefits of MAS in a more open-ended discovery setting with a clear verifier signal.
As discussed in Section~\ref{sec:effective-communication}, it is 
\emph{verified progress sharing} that allows us to avoid ``agent collapse''
that previous works have observed.
For more detailed surveys of MAS, we refer to \citet{ferrag2026llm, tran2025multi}.

\paragraph{Test-time discovery.}
Test-time discovery often uses LLMs (or agents) to generate, evaluate, and iteratively
improve candidate solutions, rather than producing an answer in a single attempt from a pretrained model.
Within this paradigm, \citet{novikov2025alphaevolve} keep the 
underlying LLMs frozen and accumulate progress
through evolutionary program search, evaluator feedback, and a database of generated attempts.
\citet{wang2026thetaevolve,yuksekgonul2026learning}, by contrast, 
train the model via reinforcement learning during discovery, 
updating the LLM using feedback obtained from the target problems. 
Recent work also focuses on custom agentic workflows, where agents are assigned
specialized roles to generate, critique, and refine hypotheses, often
orchestrated by a central agent~\citep{gottweis2026coscientist,yamada2025aiscientistv2}.
Our setting, however, is orthogonal to the works above as they focus on either
single-agent capabilities or multi-agent workflows without
establishing an advantage over single-agent systems under matched compute.

Recent work illustrates the potential of multi-agent communication for discovery.
\citet{anthropic2026cryptographicweaknesses} describe an improved attack on HAWK
whose key idea emerged through an exchange between two agents, while
\citet{openai2026navierstokes} report a Navier--Stokes result produced by a group
of roughly 10,000 concurrent agents.
Similarly, \citet{bianchi2026einsteinarena} explore mathematical discoveries by an open
community of heterogeneous agents sharing solutions and encouraging public discussion.
These are systems in which agents choose how to explore and
communicate without a fixed workflow.

These findings motivate controlled tests of whether communication improves
on independent search, especially under matched compute.
\citet{anthropic2026multiagentsystems} report more vulnerabilities from coordinating
agents than from independent agents, though Claude Mythos Preview uses comparable
tokens per vulnerability when both methods are evaluated on the same search scope.
On other tasks, they observe failures to integrate agents' work, conformity,
and failures to share decisive evidence.
Closely related to our setup is \citet{qu2026coral}, whose agents share discoveries
through persistent memory and outperform the best of four independent runs under
matched wall-clock budgets.
In contrast, we diagnose clear conditions for when test-time communication is advantageous, 
through compute-controlled studies of scaling coordinating agents on
\texttt{ARC-AGI-3} and long-horizon algorithm/ML tasks, including negative results
at low-compute budgets and on Terminal-Bench~2.0.

\paragraph{Harness optimization and orchestration.}
One line of work aims to learn the ideal topology and communication protocol of MAS.
Agent graphs and communication links can be optimized directly
\citep{zhuge2024gptswarm,zhang2025gdesigner}, while architecture search can
adapt the system to the task \citep{zhang2025maas,yun2026graphofagents}. A
second line controls when and which agents participate. Systems can select
teams or gate communication \citep{liu2024dylan,ghosh2026grade}, regulate
interaction to avoid harmful scaling \citep{wang2025mars,shao2026monoscale},
or add agents dynamically and decompose work into explicit subtasks
\citep{costa2026agentspawn,gu2025agentgroupchatv2}.
\citet{gu2025agentgroupchatv2} investigate agent diversity and observe weaker returns from homogeneous teams,
though heterogeneity may bring down the performance of the best model
in the team even when explicitly told who the best is~\citep{pappu2026experts}.

Hierarchical systems place these decisions with a central
coordinator. Coordinators can be trained to assign work to frozen agents
\citep{xu2026trinity,nielsen2026conductor}, while adaptive hierarchies can
delegate work and synthesize results, primarily for parallel decomposition
\citep{tang2026fugu,openai2026multiagent}. Recursive delegation has also been
used for long-context inference rather than multi-agent orchestration
\citep{zhang2025rlm}. Harness optimization is broader and need not involve
multiple agents. Executable or prompt-level agent loops can be optimized from
scores, traces, or revision feedback
\citep{lee2026metaharness,lee2026recursiveharness}. Prompt optimization has also
been studied for multi-agent protocols \citep{bai2026maspromptbench}.

Unlike these approaches, we do not study orchestration or multi-agent harness optimization. 
Instead, we use a fixed shared-workspace scaffold of identical, unassigned peers and 
compare its performance with compute-matched independent runs to isolate communication from
parallel exploration.

\paragraph{Agent coordination as interactive systems.}
We view sequential Monte Carlo as an interacting-population counterpart to
best-of-$k$, with partial solutions repeatedly scored and resampled
\citep{gordon1993novel,liu1998sequential,delmoral2004feynmankac}. Populations
degenerate over long horizons \citep{kong1994sequential,doucet2011tutorial},
while resampling concentrates shared ancestors, and harder problems require
larger populations \citep{liu1998sequential,snyder2008obstacles}. These
phenomena correspond to our gains at depth, herding failures, and non-monotone
returns to team size.

Related methods now guide language model inference.
Particle methods support probabilistic inference and test-time scaling
\citep{zhao2024twisted,puri2025rollout}, as well as constrained generation
\citep{loula2025syntactic}. Interacting particles have also been adapted to
diffusion models \citep{singhal2025fksteering,luo2026selfrewarding}, with
related path selection for diffusion language model decoding
\citep{lee2025lookahead}. The analogy is conceptual and suggests that
coordination is most useful over long horizons with informative intermediate
feedback and a diverse population.

%% file: sections/conclusion.tex
\section{Conclusion and Future Directions}

In this paper, we studied the capabilities of test-time communication, a multi-agent setting in which agents
pursue the same open-ended objective and exchange evidence through a shared
workspace. This allows agents to explore different approaches in parallel while
turning individual discoveries into cumulative progress. On \texttt{ARC-AGI-3}, team@3 matches
the solve rate of 13 independent agents and team@5 matches that of 33, with the
multiplier growing from $4.3\times$ to $6.6\times$ as the team scales. Communication also solves
a game that remains unsolved across single-agent trials and makes the average member of a
team as action-efficient as the best of independent agents. The same pattern emerges in
longer-horizon tasks. Team@$k$ reaches a packing score of $0.945$ on
Frontier-CS polyomino packing, above the prior best-known score of $0.894$, and produce a
1,957-byte MNIST classifier, improving on both the best@4 result and the 2,461-byte
best-known human solution. These gains emerge only after an initial coordination cost, but with adequate 
compute, communication can outperform independent agents in both performance and efficiency.

The agent-accessible verifier in each task is central to our experimental design. 
Our setting differs from multi-agent debate, which asks agents to reconcile answers without
necessarily grounding the exchange in environmental feedback.
Level completion, packing score, and compression size give agents an objective basis
for rejecting failures, comparing approaches, and building on improvements. 
Whether communication produces similar gains when feedback is sparse,
noisy, or subjective remains an important open question.

On the other hand, multi-agent harnesses may matter as much as single-agent harnesses. 
Our controlled design is deliberately narrow. 
We fix one communication harness and use homogeneous agents with identical instructions 
and no assigned roles or central orchestrator. 
This isolates communication, but it does not imply that roles, model diversity, communication topology, 
or orchestration are unimportant. Future work should consider how agents communicate, 
which peers receive their messages, or how teams should be organized.
These policies could be engineered or evolved, similarly to harness optimization
\citep{lee2026metaharness,lee2026recursiveharness}.
Our results provide a baseline for future work, and hopefully motivate further study of multi-agent collaboration 
and test-time communication.

%% file: sections/setup_appendix.tex
\section{Full Experimental Setup}
\label{app:setup}

This section provides the communication-prompt and implementation details deferred
from Section~\ref{sec:setup}, followed by task-specific evaluation details for the
experiments reported in the main text.
The tasks, prompts, communication protocol, and further details can be found in 
\url{https://github.com/jerryjonghopark/test-time-communication}.

\subsection{Communication prompts, runtime, and accounting}
\label{app:common-setup}

\paragraph{Communication-prompt summary.}
Beyond the shared-workspace instructions in Section~\ref{sec:method}, the prompt
specifies when agents may adopt a peer's approach. Adoption requires a measured improvement,
replication, an actionable alternative to a blocked approach, or final convergence.
After adoption, agents are instructed to preserve a meaningful variation and record
what they adopted and why.

For MNIST classifier compression, agents build and validate candidates in scratch space, then
recheck the current best under the lock before promoting a strictly better submission.
These are instructions to the agents, rather than automatic verification or adoption
by the harness.

\paragraph{Runtime versions.}
The ARC-AGI-3 and Terminal-Bench 2.0 experiments use GitHub Copilot CLI~1.0.54, the long-horizon packing
experiments use version~1.0.70, and MNIST classifier compression uses version~1.0.78.

\paragraph{Shared and per-agent resources.}
CPU and host-memory limits apply to the entire task container. All workers in a
team share those limits, rather than each receiving a separate container-sized
allocation. Independent trials each have their own container. The configured
allocations are shown below.

\begin{table}[t!]
\centering
\small
\caption{\textbf{CPU/GPU and container setup for each task.}}
\begin{tabular}{@{}llll@{}}
\toprule
Task & Solo container & Team container & Task GPU \\
\midrule
ARC-AGI-3 & 4 CPUs, 8 GiB & 4 CPUs, 8 GiB & None \\
Polyomino packing & 2 CPUs, 6 GiB & 2 CPUs, 6 GiB & None \\
MNIST compression & 3 CPUs, 8 GiB & 12 CPUs, 32 GiB ($k=4$) & A100 \\
Terminal-Bench 2.0 & Task-defined & Task-defined, shared & Task-defined \\
\bottomrule
\end{tabular}
\end{table}

The ARC and packing allocations do not scale with team size. MNIST scales the
container's CPU and host-memory limits by four for team@4, but does not enforce
separate per-worker host-memory partitions. Its CPU thread pools are set to three
threads per worker. 
Each agent is instructed to use at most 1,000 MiB of aggregate GPU memory across
all its processes. A 400-MiB PyTorch allocator setting leaves headroom for
CUDA context and library memory, but is not a hard limit on total GPU memory.
The per-agent GPU-memory rule is self-enforced. These task-compute resources are
distinct from model inference through the Copilot service and from ARC's separate
per-agent action budgets.

\subsection{Agent Communication Prompt}
\label{app:comm-prompt}

Every agent in a team receives the prompt below, in addition to the task
instruction. The text is stored as a template, and before a run starts the
harness substitutes the team size into \verb|{n}| and \verb|{nm1}| and the
shared-scratch paths into the remaining braced fields. Every agent in a trial
then receives the identical filled-in text, so the only asymmetry between
agents comes from the slot each one wins at runtime.

{\scriptsize
\begin{verbatim}
    {n} agents share this container and work the same task in parallel, all with this
    identical prompt. Search widely without herding, coordinate as you go, and keep
    improving until time runs out.

    Shared scratch (create on first use):
      - slots/approaches: {slots}/
      - findings:         {findings}
      - disconfirmations: {disconfirm}
      - score log:        {plateau}
      - coordination:     {coordination}   (empty -- conventions you author)
    Your private scratch is {base}/work-$S after you claim slot $S.

    1. CLAIM A SLOT AND PICK A DISTINCT APPROACH:
         mkdir -p {slots}
         for i in $(seq 0 {nm1}); do mkdir "{slots}/slot-$i" 2>/dev/null && S=$i && break; done
         mkdir -p {base}/work-$S
         touch {coordination}
         echo "<your approach + what you will deliberately not assume>" > "{slots}/slot-$S/approach"
         cat {slots}/slot-*/approach
       If a lower-numbered slot already took your approach, change yours. Cover a
       different part of the search space; do not agree early.

    2. ALWAYS BE ACTING. Never end a turn with only prose or a plan. Every turn must
       run a command that advances or tests the work: take a real action the task
       accepts and read back its result. Publishing, disconfirming, coordinating, and
       pivoting are bookkeeping around real actions, never a substitute for taking one.

    3. EVALUATE AND RECORD EVERY ATTEMPT. Score each change with the task's own
       scoring or feedback mechanism (find it in AGENT.md or the task instructions),
       then log the result:
         echo "[slot $S $(date -u +%H:%M:%S)] score=<score-or-progress> family=<approach>" >> {plateau}

    4. SHARE WITHOUT HERDING. Append concise findings with evidence and cost to
       {findings}, and label weak claims as weak. Spend part of your effort trying to
       FALSIFY the leading idea or your own, recording negative results in
       {disconfirm}. Do not write prose whose only purpose is to make peers copy you.

    5. COORDINATE ON SHARED RESOURCES. You share the graded output, common files, and
       the environment with peers who run blind to your session. Treat every shared
       resource as contested: before you touch one, re-check {coordination} and the
       resource's current state; after, confirm your change survived and did not just
       repeat a peer's. A collision is any overwritten, duplicated, or conflicting work
       that wastes effort. No coordination scheme is provided -- {coordination} is empty
       and yours to author: on a collision, write a convention there concrete enough for
       a peer to follow, that changes your next action, then follow it.

    6. HIGH BAR TO CONVERGE. Keep your own approach unless another clearly beats it on
       a measured, reproduced result, or yours is blocked, or the run is wrapping up.
       Even then, keep one real difference (a parameter, subcase, representation, or
       fallback) until the very end.

    7. NEVER STOP WHILE TIME REMAINS. A working result is not the finish line; the
       clock running out is the only acceptable reason to stop. Do not declare the
       task done, final, solved, or "at the ceiling" and go idle -- a suspected
       ceiling is a claim to disconfirm, not a reason to quit.

    8. BREAK PLATEAUS BY CHANGING FAMILY. You are plateaued when your best score has
       not strictly improved over 3 consecutive attempts. Then stop tuning
       and switch to a STRUCTURALLY DIFFERENT approach -- a different core principle or
       assumption, not a variant of the current one. Keep a short list of untried
       families in {base}/work-$S so you always have a next one ready. Read peers'
       approaches and {plateau} first and pick a family no active peer is on; adopting
       a peer who is also plateaued is not progress.

    Run a tight loop -- change -> evaluate -> record -> repeat -- without pausing. Do
    not use the internet, curl, wget, HTTP libraries, or secrets.
\end{verbatim}
}

\subsection{ARC-AGI-3}
\label{app:arc-setup}

\paragraph{Task and interface.}
Agents interact with the games by receiving integer-valued
grids and selecting from the available discrete controls or coordinate clicks.
Each level has its own cap, so unused budget
from a later level cannot rescue an agent stalled earlier.  We use the benchmark-native
cap of five times the corresponding human-action baseline, independently for every agent.

Every team member owns a separately authenticated ARC session. Sharing an action trace
does not copy state and does not clear a peer's level.
Time-dependent progress curves use the first logged completion of each level.

\paragraph{Synchronization.}
Team runs synchronize at intervals of half the current level's action budget,
rounded up to an integer number of actions. The harness tracks each agent's phase
by its level and the number of these intervals it has used. An agent ahead of the
slowest live teammate, either within a level or by reaching the next level, cannot
take another game action until its teammates catch up or leave the live set.
During this pause, agents can still inspect state, run shell commands, and exchange
notes. Refused game actions consume no budget. Finished, game-over, and
budget-exhausted agents do not block their teammates, and inactive sessions are
excluded after an idle timeout. This synchronization applies only to communicating
teams, not to best@$k$.

\paragraph{Closed-book condition.}
The task container requires network egress for the model control plane, but agents receive
no browser or web-search tool and are explicitly forbidden to fetch public solutions,
replays, or ARC pages.  Only the task prompt, local run files, the agent's own environment
observations, and teammates' within-trial notes are admissible.
We inspect run trajectories afterwards to verify that no agent accessed external ARC information.

\subsection{Frontier-CS polyomino packing}
\label{app:poly-setup}

\paragraph{Objective and scorer.}
Each of 70 hidden cases contains $n\in[100,10^4]$ edge-connected polyominoes of one to ten
cells. A program may reflect, rotate by multiples of $90^\circ$, and translate each piece.
All pieces must lie without overlap in one integer-grid rectangle. If $C_j$ is the total
number of occupied cells and $A_j=W_jH_j$ the returned area, the normalized case quality is
$C_j/A_j$ and the reported score is its mean over the fixed cases.  Invalid output causes
the submission to be rejected.  Candidate programs compile as GNU C++17 and receive 2 seconds and
256 MiB per case.

The hidden instance directory is mounted read-only inside a separate scorer service and
is never mounted into the agent container, which contains the task statement and a
submission client.  On each attempt, the client sends the candidate C++ source over an
internal endpoint. The scorer compiles and executes it, with case-output retention
disabled, and returns the aggregate score together with compact status, timing, and
scoring metadata. The scorer never returns the instances or program outputs. Agents may submit repeatedly
without penalty, but cannot inspect the evaluation data.  Every attempt, timestamp,
status, and score is recorded in a submission ledger. The final verifier
retains the highest valid scored submission in the run, protecting an earlier champion
from a broken final edit.

\subsection{MNIST classifier compression}
\label{app:mnist-setup}

\paragraph{Data and artifact contract.}
The official 60,000-image MNIST training split is partitioned once into 55,000 training
and 5,000 development images. The agent image contains only these two partitions, while the
official 10,000-image test archive remains on the host and is never copied into the task
container.  Agents may train only on the 55,000 images, use the development set for
selection, and query the sealed oracle only after a full-development accuracy of at least
$99.4\%$.

An oracle request snapshots the canonical submission and passes that snapshot to a
host-owned evaluator.  Each request runs in a fresh container with networking disabled, a
read-only root filesystem, privilege escalation disabled, and the submission mounted
read-only.  Within it, a trusted verifier privately shuffles the test set, makes the test
archive and labels unreadable to submitted code, and invokes inference under an
unprivileged user.  The submission receives only batches of test images and writes
predictions to isolated scratch space. The trusted verifier alone reads the labels and
computes accuracy.  The container is discarded after evaluation, and the oracle returns
only aggregate accuracy. It never returns examples, labels, predictions, or per-example errors.
Final grading uses the same isolation boundary, with a 600-second limit for classifying
the complete test set.

The graded directory must include a \texttt{predict.py} entry point and every weight,
table, constant, generator, and decoder it needs.  We normalize file order and metadata,
form a deterministic tar archive, and apply gzip level~9. Training code outside
the submission and the preinstalled numerical runtime are not charged.

\paragraph{Open-source reference.}
The 2,461-byte reference in Section~\ref{sec:setup} is our benchmark-format reproduction
of the open-source \texttt{tiny\_MNIST} model~\citep{gandhi2024tinymnist}.  The published
architecture has 3,130 trainable parameters and reports accuracy above $99.4\%$.  We
retrained it from the published recipe, folded BatchNorm exactly into the adjacent
convolutions, and quantized the resulting weights per output channel to four bits.

For the deployable artifact, we bit-pack this state, use a compact decoder, and for fair comparison,
code-golf the inference path by eliminating general training machinery, redundant metadata,
whitespace, and intermediate file structure while preserving predictions. The complete normalized archive,
including executable inference code, is 2,461 bytes under the same deterministic gzip-9
metric used for every submission.  The reproduced artifact attains exactly $99.40\%$ on
the sealed test set.  Thus 2,461 bytes is a measured end-to-end submission size, not the
size of the upstream training checkpoint.

\paragraph{Environment.}
The pinned stack is Python~3.12.13, PyTorch~2.5.1 with CUDA~12.4,
torchvision~0.20.1, NumPy~2.5.1, SciPy~1.18.0, and scikit-learn~1.9.0.
Agents train from scratch on the provided data. Downloading additional data or
pretrained weights is prohibited, and inference evaluation has no network access.

\paragraph{Checkpoints and trajectory accounting.}
Agents are instructed to save each smaller submission that passes the full
development-set check, together with its result, timestamp, provenance, and
recomputed archive size. Saved checkpoints are audited, and the paper figures use
only submissions confirmed at $\geq99.4\%$ on the sealed test set.
Figure~\ref{fig:mnist-compression} compares four independent GPT-5.6~Sol runs with
one team@4 run. In the elapsed-time panel, best@4 at time $t$ is the smallest
qualifying submission found by any of the four independent agents within its own
first $t$ hours. The output-token panel uses the same best-so-far trajectory,
placing each improvement at the sum of tokens consumed by all four independent
runs by that elapsed time. Team tokens likewise sum all four workers. Neither
curve counts only the tokens of the agent that produced an improvement.

\subsection{Terminal-Bench 2.0}
\label{app:tb-setup}

\paragraph{Task environments and execution.}
The harness loads \texttt{terminal-bench@2.0} through Harbor and runs each task in
its Docker environment. CPU, host-memory, storage, and GPU requests come from the
individual task configuration, rather than a uniform benchmark-wide allocation.
The harness does not automatically multiply these requests by the number of
workers. In a team@2 trial, both workers share the task container and its resource
limits. Agent execution and verifier timeouts are configured separately by each
task, so there is no single wall-clock horizon analogous to the packing or MNIST
experiments. The verifier runs after agent execution and produces the trial's
reward from the resulting environment.

\paragraph{Aggregation.}
For Table~\ref{tab:terminalbench}, pass@1 averages the four independent outcomes
for each task. The saved independent results are grouped into two batches of two
attempts. Pass@2 counts a task as successful within a batch if either attempt
passes, then averages the two batch accuracies. Team@2 averages the two team-trial
outcomes for each task. Unfinished tasks are assigned zero, and all 89 tasks
remain in the denominator. Max Accuracy selects the higher batch or team-trial
accuracy over the whole benchmark, rather than selecting a different batch or
team trial for each task.

%% file: sections/proof_appendix.tex
\section{Analysis for Verified Progress Sharing}
\label{app:verified-progress}

This appendix gives the calculations behind the conceptual model and
Proposition~\ref{prop:verified-progress}. The mathematics is largely standard rather
than new. It combines elementary facts about exponential order statistics,
Erlang sums, and Chernoff bounds. We keep the model deliberately simple so that
it isolates how reusable, verified progress can change the value of parallel
search. It is not intended as an empirical model of agent completion times or
as a general theory of multi-agent communication.

\subsection{Idealized progress sharing}

We first compare independent and communicating agents under the same collection
of stage-level search times. Recall that $X_{ij}$ is the time agent $i$ would
take to discover the next improvement at stage $j$. A communicating team uses
the first discovery at every stage. Therefore, for every agent $i$,
\[
T_{\mathrm{team}}
=
\sum_{j=1}^{m}\min_{\ell\le k}X_{\ell j}
\le
\sum_{j=1}^{m}X_{ij}.
\]
Taking the minimum over agents gives
$T_{\mathrm{team}}\le T_{\mathrm{best}}$. This is the pointwise comparison in
the main text. It captures the distinction between combining stage-level
breakthroughs and selecting one complete trajectory after all runs finish.

Under the modeling assumption
$X_{ij}\overset{\mathrm{iid}}{\sim}\operatorname{Exp}(\lambda)$, define the
waiting time for the team's next verified improvement as
$M_j=\min_{i\le k}X_{ij}$. Its survival probability is
\[
\Pr(M_j>t)
=
\prod_{i=1}^{k}\Pr(X_{ij}>t)
=
e^{-k\lambda t},
\qquad t\ge0.
\]
Thus each $M_j$ is distributed as $\operatorname{Exp}(k\lambda)$. Independence
across stages then gives
\[
T_{\mathrm{team}}
\sim\operatorname{Erlang}(m,k\lambda),
\qquad
\mathbb{E}T_{\mathrm{team}}=\frac{m}{k\lambda}.
\]
For comparison, each independent agent must complete all $m$ stages itself. Its
completion time $T_i=\sum_{j=1}^{m}X_{ij}$ satisfies
\[
T_i\sim\operatorname{Erlang}(m,\lambda),
\qquad
T_{\mathrm{best}}=\min_{i\le k}T_i.
\]
These distributions provide the ingredients for the fixed-budget comparison.

\subsection{Completion probability at a fixed budget}

We now prove Proposition~\ref{prop:verified-progress}. The only additional
ingredient is a standard exponential tail bound for an Erlang random variable.

\begin{proof}[Proof of Proposition~\ref{prop:verified-progress}]
Let $S\sim\operatorname{Erlang}(m,1)$. Its moment-generating function is
\[
\mathbb{E}[e^{\theta S}]
=
(1-\theta)^{-m},
\qquad \theta<1.
\]
For $a>1$, apply Markov's inequality with
$\theta=1-1/a>0$ to obtain
\[
\Pr(S\ge am)
\le
e^{-\theta am}(1-\theta)^{-m}
=
e^{-mI(a)},
\]
where $I(a)=a-1-\log a$. For $0<a<1$, the same choice gives
$\theta<0$. Since $S\le am$ implies
$e^{\theta S}\ge e^{\theta am}$, Markov's inequality gives the corresponding
lower-tail bound. Hence
\begin{equation}
\begin{aligned}
\Pr(S\ge am)&\le e^{-mI(a)}
&&\text{for }a>1,\\
\Pr(S\le am)&\le e^{-mI(a)}
&&\text{for }0<a<1.
\end{aligned}
\label{eq:app-erlang-tails}
\end{equation}

Set $\tau=\alpha m/\lambda$, with $1/k<\alpha<1$. Since
$k\lambda T_{\mathrm{team}}$ has the same distribution as $S$ and
$k\alpha>1$,
\[
\Pr(T_{\mathrm{team}}>\tau)
\le e^{-mI(k\alpha)}.
\]
Likewise, $\lambda T_i$ has the same distribution as $S$, so $\alpha<1$
gives
\[
\Pr(T_i\le\tau)\le e^{-mI(\alpha)}.
\]
Taking a union bound over the $k$ independent agents yields
\[
\Pr(T_{\mathrm{best}}\le\tau)
=
\Pr\!\left(\bigcup_{i=1}^{k}\{T_i\le\tau\}\right)
\le k e^{-mI(\alpha)}.
\]
Combining the two bounds proves
\[
\Pr(T_{\mathrm{team}}\le\tau)
\ge1-e^{-mI(k\alpha)},
\qquad
\Pr(T_{\mathrm{best}}\le\tau)
\le k e^{-mI(\alpha)}.
\]
Because $I(a)>0$ for $a>0$ and $a\ne1$, both exponents are strictly
positive for fixed $k$ and $\alpha\in(1/k,1)$.
\end{proof}